\documentclass[lettersize,journal]{IEEEtran}
\usepackage{amsmath,amsfonts}
\usepackage{algorithmic}
\usepackage{algorithm}
\usepackage{array}
\usepackage[caption=false,font=normalsize,labelfont=sf,textfont=sf]{subfig}
\usepackage{textcomp}
\usepackage{stfloats}
\usepackage{url}
\usepackage{verbatim}
\usepackage{graphicx}
\usepackage{cite}
\usepackage{multirow}
\usepackage{booktabs}
\usepackage{amsthm}
\usepackage{caption}

\hyphenation{op-tical net-works semi-conduc-tor IEEE-Xplore}
\usepackage[normalem]{ulem}
\useunder{\uline}{\ul}{}
\newtheorem{proposition}{Proposition}[section]
% updated with editorial comments 8/9/2021

\begin{document}

%\title{Adversarial Analysis of Signed Graphs\\ with Balance Theory}
\title{Adversarial Robustness of Link Sign Prediction in Signed Graphs}
% \title{A Sample Article Using IEEEtran.cls\\ for IEEE Journals and Transactions}

\author{Jialong Zhou, Xing Ai, Yuni Lai, Tomasz Michalak, Gaolei Li, Jianhua Li,\\ Di Tang, Xingxing Zhang, Mengpei Yang, Kai Zhou
% \author{IEEE Publication Technology,~\IEEEmembership{Staff,~IEEE,}
        % <-this % stops a space
\thanks{Jialong Zhou, Xing Ai, Yuni Lai, and Kai Zhou are with the Department of Computing, The Hong Kong Polytechnic University, Hong Kong, China (e-mail: jialong.zhou@connect.polyu.hk, xing96.ai@connect.polyu.hk, csylai@comp.polyu.edu.hk, kaizhou@polyu.edu.hk).}% <-this % stops a space
\thanks{Tomasz Michalak is with the University of Warsaw \& Ideas NCBR, Warsaw, Poland (e-mail: tpm@mimuw.edu.pl).}
\thanks{Gaolei Li and Jianhua Li are with the School of Electronic Information and Electrical Engineering, Shanghai Jiao Tong University, Shanghai, China (e-mail: gaolei\_li@sjtu.edu.cn, lijh888@sjtu.edu.cn).}
\thanks{Di Tang and Xingxing Zhang  are with the Shanghai CESI Technology Co., Ltd. Mengpei Yang is with the China Electronics Standardization Institute (e-mail: ditonytang@hotmail.com, zhangxx@cesi.cn, yangmp@cesi.cn).}
}

% The paper headers
%\markboth{Journal of \LaTeX\ Class Files,~Vol.~14, No.~8, August~2021}%
%{Shell \MakeLowercase{\textit{et al.}}: A Sample Article Using IEEEtran.cls for IEEE Journals}

% \IEEEpubid{0000--0000/00\$00.00~\copyright~2021 IEEE}
% Remember, if you use this you must call \IEEEpubidadjcol in the second
% column for its text to clear the IEEEpubid mark.

\maketitle

\begin{abstract}
Signed graphs serve as fundamental data structures for representing positive and negative relationships in social networks, with signed graph neural networks (SGNNs) emerging as the primary tool for their analysis. Our investigation reveals that balance theory, while essential for modeling signed relationships in SGNNs, inadvertently introduces exploitable vulnerabilities to black-box attacks. To showcase this, we propose balance-attack, a novel adversarial strategy specifically designed to compromise graph balance degree, and develop an efficient heuristic algorithm to solve the associated NP-hard optimization problem. While existing approaches attempt to restore attacked graphs through balance learning techniques, they face a critical challenge we term ``Irreversibility of Balance-related Information," as restored edges fail to align with original attack targets. To address this limitation, we introduce Balance Augmented-Signed Graph Contrastive Learning (BA-SGCL), an innovative framework that combines contrastive learning with balance augmentation techniques to achieve robust graph representations. By maintaining high balance degree in the latent space, BA-SGCL not only effectively circumvents the irreversibility challenge but also significantly enhances model resilience. Extensive experiments across multiple SGNN architectures and real-world datasets demonstrate both the effectiveness of our proposed balance-attack and the superior robustness of BA-SGCL, advancing the security and reliability of signed graph analysis in social networks. Datasets and codes of the proposed framework are at the github repository https://anonymous.4open.science/r/BA-SGCL-submit-DF41/.
\end{abstract}

\begin{IEEEkeywords}
Signed Graph, Balance Theory, Black-box Attacks, Graph Contrastive Learning, Adversarial Robustness.
\end{IEEEkeywords}

\section{Introduction}
\IEEEPARstart{H}{uman} relationships encompass a broad spectrum of connections, from positive interactions like trust and support to negative associations like distrust and conflict. Signed graphs have emerged as a powerful tool to represent these dual-natured relationships by assigning corresponding signs $(+/-)$ to edges. A fundamental task in signed graph analysis is \textit{link sign prediction}~\cite{song2015link,tang2016survey,leskovec2010predicting}, which aims to predict signs of remaining edges based on partially observed graph information. Signed Graph Neural Networks (SGNNs) \cite{zhang2024signed}, such as SGCN \cite{derr2018signed} and SDGNN \cite{huang2021sdgnn}, have emerged as the dominant models for this task. A distinctive feature of SGNNs is their utilization of \textit{balance theory} \cite{davis1967clustering,cartwright1956structural,kirkley2019balance}, a fundamental social science theory that governs the distribution of signs in signed networks.

Despite their effectiveness, SGNNs have demonstrated vulnerability to adversarial attacks. In real-world applications such as bitcoin trading platforms and e-commerce sites, malicious users can manipulate these signed networks by altering their edge signs—for instance, by providing false ratings or deliberately misrepresenting relationship signs. Such manipulations, even when affecting only a small portion of these signs, can significantly degrade SGNN performance and potentially compromise the integrity of critical systems.

In our prior work \cite{zhou2024black}, we systematically studied this vulnerability by introducing \textit{balance-attack}, a novel black-box poisoning attack. The core idea is to poison the graph's training data by targeting the fundamental mechanism of SGNNs: balance theory. Previous research has demonstrated that SGNNs cannot learn accurate node representations from unbalanced triangles \cite{zhang2023rsgnn}. By strategically manipulating edge signs to reduce the graph's balance degree, our method exploits this inherent weakness of SGNNs in learning from unbalanced structures. While this presents an NP-hard optimization problem \cite{diao2020approximation}, we propose an efficient heuristic algorithm that effectively compromises the performance of existing SGNN models.

The success of our balance-attack and other methods like FlipAttack \cite{zhu2024towards} highlights a critical vulnerability in SGNNs. This is a pressing issue as they are increasingly used in security-sensitive areas like Bitcoin trust prediction \cite{godziszewski2021attacking,grassia2022wsgat,huang2022pole}. However, while our previous work focused on exposing this vulnerability, a robust defense remains an open and urgent problem. The only existing method, RSGNN \cite{zhang2023rsgnn}, is designed for random noise and, as our experiments show, fails against targeted adversarial attacks. This significant research gap motivates our work: to design the first truly adversarially robust SGNN.

\textbf{Adaptation from previous defense}.\quad Our initial effort was to adapt \textit{structural learning}, a proven method for making GNNs robust on \textit{unsigned} graphs \cite{jin2020graph,xu2021speedup,li2024gslb,zhu2023focusedcleaner}. The essential idea is to refine the poisoned graph structure using some proper guidance as the learning objective. To adapt this to signed graphs, we utilize the balance degree as this objective, since attacks tend to decrease it. Hence, we refer to this customization of structural learning applied to signed graphs as \textit{balance learning}. Unfortunately, the balance learning approach fails to exhibit good robustness in the face of attacks. Our further investigation reveals that while balance learning can effectively restore the balance degree, it fails to recover the distribution of signs -- we term this challenge as \textit{``Irreversibility of Balance-related Information''} (detailed later). Thus, the failure of this intuitive adaptation underscores the need for a more sophisticated, dedicated defense for SGNNs.

\textbf{Our solution}.\quad We propose a novel robust SGNN model, \textit{Balance Augmented-Signed Graph Contrastive Learning} (BA-SGCL), which builds upon the Graph Contrastive Learning (GCL) framework \cite{sun2019infograph,qiu2020gcc,you2020graph} to \textit{indirectly} address the Irreversibility of Balance-related Information challenge. Our core idea is to contrast a positive view, which is an augmented version of the graph with an enhanced balance degree, against the original input graph as the negative view. To generate this positive view, we utilize the balance degree as a guiding factor to shape the Bernoulli probability matrix within a learnable augmenter \cite{hobert2011data}. By maximizing the mutual information between these two views \cite{abdal2019image2stylegan,liu2019latent,zhang2021we}, our model learns node embeddings that are implicitly characterized by a high balance degree, bypassing the need for direct graph recovery. In conjunction with a supervised objective that maximizes the mutual information between embeddings and labels, our approach effectively defends against attacks and improves prediction accuracy.

The major contributions of our paper are as follows:
\begin{itemize}
    \item We provide a theoretical analysis of how balance-oriented attacks impact SGNNs from an information-theoretic perspective, offering insights into the vulnerability we previously identified.
    \item We identify and formalize a fundamental challenge in defending SGNNs, the ``Irreversibility of Balance-related Information," which helps explain why conventional defense paradigms struggle in this context.
    \item Building upon this analysis, we propose a novel robust model, BA-SGCL, and provide theoretical justification for its design principles.
    \item Our extensive experiments demonstrate that the proposed BA-SGCL model significantly outperforms existing baselines under various adversarial attacks on signed graphs.
\end{itemize}

\section{Related Work}

Research in adversarial attacks has been extensively explored across various machine learning models \cite{lai2023towards,sun2022adversarial,zhu2019robust}. Unlike naturally occurring outliers, adversarial examples are intentionally crafted with subtle perturbations to deceive machine learning models. GNNs have been shown to be particularly susceptible to these small adversarial perturbations. As a result, numerous studies have focused on adversarial attacks for graph learning tasks. For instance, Bojchevski et al. \cite{bojchevski2019adversarial} proposed a poisoning attack on unsupervised node embedding methods, leveraging perturbation theory to maximize the loss after training. In another line of work, Zugner et al. \cite{zugner_adversarial_2019} tackled the bi-level optimization problem inherent in training-time attacks by employing meta-gradients.

In the context of signed graphs, research on adversarial attacks is more limited but has seen growing interest \cite{godziszewski2021attacking,zhou2024black,zhu2024towards,bu2024uncovering}. Early work by Godziszewski et al. \cite{godziszewski2021attacking} introduced an attack on sign prediction where the goal is to conceal target link signs by removing non-target link signs. More recently, balance-attack \cite{zhou2024black} demonstrated effective black-box attacks on SGNNs by decreasing the graph's balance degree. Unfortunately, SGNNs currently lack robust defense mechanisms against such attacks. While RSGNN \cite{zhang2023rsgnn} is a notable model designed for robustness, it primarily excels at handling random noise and shows limited efficacy against targeted adversarial attacks.

To address these challenges, our approach builds upon Graph Contrastive Learning (GCL) \cite{sun2019infograph,qiu2020gcc,you2020graph,ai2024graph}. GCL aims to learn invariant and generalized node representations by maximizing the correspondence between different augmented views of a graph. A common practice in GCL is to use graph augmentation to generate multiple views for contrastive pairing \cite{bielak2022graph,hassani2020contrastive,thakoor2021bootstrapped}, which helps the model capture essential graph properties. For signed graphs specifically, SGCL \cite{shu2021sgcl} adapts contrastive learning by creating augmented views that preserve signed structures. UGCL \cite{ko2023universal} further improves stability with Laplacian perturbation. However, these existing GCL-based methods for signed graphs are designed for clean graph representations and do not explicitly model or defend against adversarial attacks.

\section{Preliminaries}

\subsection{Notations}

We define a signed directed graph as $\mathcal{G}=(\mathcal{V},\mathcal{E}^+,\mathcal{E}^-)$, where $\mathcal{V}=\{v_1, \dots, v_n\}$ is the set of $n$ nodes. The sets of positive and negative edges are denoted by $\mathcal{E}^+ \subseteq \mathcal{V} \times \mathcal{V}$ and $\mathcal{E}^- \subseteq \mathcal{V} \times \mathcal{V}$ respectively, with the constraint $\mathcal{E}^+ \cap \mathcal{E}^- = \emptyset$. The sign of an edge $e_{ij}$ from node $v_i$ to $v_j$ is denoted by $s_{ij} \in \{+1, -1\}$.

The graph's structure and signs are represented by an adjacency matrix $\mathbf{A} \in \{-1, 0, 1\}^{n \times n}$, where $A_{ij} = s_{ij}$ if an edge exists from $v_i$ to $v_j$, and $A_{ij} = 0$ otherwise. The matrix of node embeddings learned by a model is denoted by $\mathbf{Z} \in \mathbb{R}^{n \times d}$, where $d$ is the embedding dimension. We denote the sets of training and testing edges as $\mathcal{D}_{\text{train}}$ and $\mathcal{D}_{\text{test}}$, respectively. The model parameters are denoted by $\boldsymbol{\theta}$, and its prediction for the sign of an edge $e$ is $f_{\boldsymbol{\theta}^*}(\mathcal{G})_e$. Finally, $\mathcal{L}_{\text{train}}$ is the training loss and $\mathcal{L}_{\text{atk}}$ is the attacker's objective function.

\subsection{Link Sign Prediction}

\begin{figure}[!t]
\centerline{\includegraphics[width=0.45\textwidth]{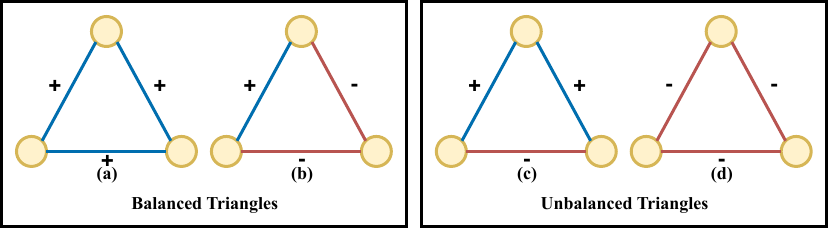}}
\caption{Balanced and unbalanced triangles. Positive and negative edges are represented by blue and red lines, respectively.}
\label{triangles}
\end{figure}

In this paper, we investigate the adversarial robustness of link sign prediction, a fundamental task in signed graph analysis. This task focuses on signed networks---where relationships are either positive or negative---and aims to infer the unknown sign of a given edge (e.g., $e_{uv}$) by leveraging the rest of the network's structure and existing edge signs. The practical significance of this problem is underscored by the prevalence of real-world networks with numerous edges whose relationship types are unobserved.

\subsection{Balance Theory}

Social balance theory \cite{zheng2015social} posits that individuals tend to form balanced social relationships, particularly in triadic formations. This theory is captured in common wisdom such as ``the friend of my friend is my friend" and ``the enemy of my enemy is my friend." In signed networks, triangular relationships are categorized as either balanced or unbalanced based on the parity of negative links they contain \cite{leskovec2010predicting,leskovec2010signed}. As illustrated in Fig.~\ref{triangles}, balanced triads are those with an even number of negative edges (zero or two). To quantify this property across an entire network, the balance degree $\mathcal{D}_3(\mathcal{G})$ \cite{cao2015grarep} is defined as the fraction of balanced triads:

\begin{equation}
\label{eqn:BalanceDegree}
    \mathcal{D}_3(\mathcal{G}) = \frac{\mathsf{Tr}(\mathbf{A}^3)+\mathsf{Tr}(|\mathbf{A}|^3)}{2\mathsf{Tr}(|\mathbf{A}|^3)},
\end{equation}
where $\mathsf{Tr}(\cdot)$ is the matrix trace operator and $|\mathbf{A}|$ denotes the element-wise absolute value of $\mathbf{A}$. The balance degree of real-world signed graphs typically ranges from $0.85$ to $0.95$. Balance theory is a cornerstone of SGNNs; for example, SGCN \cite{derr2018signed} incorporates it into its aggregation scheme, while models like SGCL \cite{shu2021sgcl} and SDGNN \cite{huang2021sdgnn} leverage it for data augmentation or loss function design.

\section{Problem Statements}
This research addresses link sign prediction under adversarial conditions. We first formalize the threat model before defining the attack and defense problems.

\subsection{Threat Model}
\subsubsection{Attacker's goal}
The attacker's goal is to degrade the overall predictive performance of a target SGNN model through a global poisoning attack \cite{zugner_adversarial_2019}. To this end, the attacker manipulates the edge signs within the training graph. The resulting perturbed graph is then used to train the target SGNN, with the aim of compromising its generalization ability on unseen data.

\subsubsection{Attacker's knowledge}
We assume a black-box setting where the adversary has access to the training graph's topology and edge signs but cannot access the target model's architecture, parameters, or gradients. This scenario reflects realistic conditions where training data may be public, but model internals are proprietary.

\subsubsection{Attacker's capability}
To ensure the attack remains imperceptible, the adversary is constrained by a perturbation budget $\Delta$, limiting the total number of edge sign flips. This is formalized as $\lVert \mathbf{A} - \hat{\mathbf{A}} \rVert_{0} \leq \Delta$, where $\hat{\mathbf{A}}$ is the perturbed adjacency matrix and $\lVert \cdot \rVert_0$ is the entry-wise $\mathcal{L}_0$ norm counting the number of non-zero elements. Since our attack only flips the signs of existing edges (e.g., $1 \leftrightarrow -1$), it inherently preserves the underlying graph topology, including node degrees and connectivity. These constraints define the set of admissible perturbations $\Phi(\mathcal{G};\Delta)$.

\subsection{Problem of Attack}
The global, untargeted poisoning attack can be formalized as a bi-level optimization problem \cite{han2024cost,zhao2024untargeted}:
\begin{align}
\label{eqn:BilevelOptimization}
    & \mathop{\min}_{\hat{\mathcal{G}} \in \Phi(\mathcal{G};\Delta)} \mathcal{L}_{\text{atk}} = \sum_{(i,j) \in \mathcal{D}_{\text{test}}}\mathbb{I}\{f_{\boldsymbol{\theta} ^ *}(\hat{\mathcal{G}})_{ij} = s_{ij}\}, \\ 
    &s.t. \  \boldsymbol{\theta}^* = \mathop{\arg \min}_{\boldsymbol{\theta}} \mathcal{L}_{\text{train}}(f_{\boldsymbol{\theta}} (\hat{\mathcal{G}})),\nonumber
\end{align}
where the adversary modifies the graph to minimize the model's accuracy on the test set, and the model is subsequently trained on this poisoned graph.

\subsection{Problem of Defense}
Given a poisoned graph $\hat{\mathcal{G}}$, the defender's objective is to train a robust SGNN that achieves a prediction accuracy comparable to what would be achieved on the clean graph $\mathcal{G}$. The defender faces two key constraints: they must work exclusively with the poisoned graph $\hat{\mathcal{G}}$ without access to its clean version $\mathcal{G}$, and they have no information about the specific attack method used or the perturbation budget $\Delta$. This creates a challenging requirement to develop a model that maintains resilience against various attack strategies.

\section{Proposed Black-Box Attack}
\subsection{Formulation of the Black-Box Attack}

As the model architecture and test data are inaccessible in a black-box setting, directly optimizing Eq.~\eqref{eqn:BilevelOptimization} is infeasible. We therefore propose targeting the graph's balance degree as a proxy objective. Prior research \cite{zhang2023rsgnn} has shown that SGNNs struggle to learn effective node representations from unbalanced triangles. This suggests that disrupting graph balance is a potent strategy for compromising SGNN performance. Specifically, training the target model on a graph with a reduced balance degree should degrade its performance on the test set. Following this intuition, we reformulate the optimization problem from Eq.~\eqref{eqn:BilevelOptimization} to directly minimize the balance degree:

\begin{equation}
\label{eqn:BalanceOptimization}
    \mathop{\min}_{\hat{\mathbf{A}} \in \Phi(\mathbf{A};\Delta)} \mathcal{D}_3(\hat{\mathcal{G}}),
\end{equation}
where $\Phi(\mathbf{A};\Delta)$ is the set of admissible perturbed matrices.

\subsection{Greedy Attack Method}

Solving the NP-hard optimization problem in Eq.~\eqref{eqn:BalanceOptimization} is computationally intractable due to the discrete nature of edge signs \cite{diao2020approximation}. Therefore, to find an effective approximate solution, we propose an efficient greedy algorithm that uses gradients as a heuristic guide. The core idea is to iteratively compute the gradient of the objective function $\mathcal{D}_3(\hat{\mathcal{G}})$ with respect to the current adjacency matrix $\hat{\mathbf{A}}$, and then flip the sign of the edge that provides the largest estimated decrease in the balance degree.

Standard gradient descent is inapplicable to the discrete adjacency matrix. For instance, updating a positive edge ($A_{ij}=1$) with a negative gradient is not a valid sign flip. Our method therefore identifies candidate edges for flipping by aligning the sign of the edge with the sign of its corresponding gradient. In each iteration, we select the candidate edge with the largest gradient magnitude to flip, repeating this process until the budget $\Delta$ is exhausted. The edge selection rule is formalized as:
% \begin{equation}
% \label{eqn:GreedyEdgeSignUpdate}
% \begin{aligned}
%     & (i^*,j^*) = \mathop{\arg \max}_{\{ (i,j) | \hat{a}_{ij}^{(k-1)} \neq 0 \wedge \text{sign}(\hat{a}_{ij}^{(k-1)}) = \atop \text{sign}(\nabla_{ij}\mathcal{D}_3(\hat{\mathcal{G}}^{(k-1)})) \}} |\nabla_{ij} \mathcal{D}_3(\hat{\mathcal{G}}^{(k-1)})|, \\
%     & \hat{a}_{i^*j^*}^{(k)} = -\hat{a}_{i^*j^*}^{(k-1)},
% \end{aligned}
% \end{equation}

\begin{equation}
\label{eqn:GreedyEdgeSignUpdate}
\begin{aligned}
    & (i^*,j^*) = \mathop{\arg \max}_{\{ (i,j) | \hat{a}_{ij}^{(k-1)} \cdot \nabla_{ij}\mathcal{D}_3(\hat{\mathcal{G}}^{(k-1)}) > 0 \}} |\nabla_{ij} \mathcal{D}_3(\hat{\mathcal{G}}^{(k-1)})|, \\
    & \hat{a}_{i^*j^*}^{(k)} = -\hat{a}_{i^*j^*}^{(k-1)},
\end{aligned}
\end{equation}
where $\hat{\mathbf{A}}^{(k)}$ is the adjacency matrix after $k$ flips, and $\nabla_{ij}$ is the gradient with respect to edge $(i,j)$. The detailed algorithm is outlined in Alg.~\ref{alg:1}.

\begin{algorithm}[!ht]
    \caption{Balance-Attack via Greedy Flips}
    \label{alg:1}
    \renewcommand{\algorithmicrequire}{\textbf{Input:}}
    \renewcommand{\algorithmicensure}{\textbf{Output:}}
    \begin{algorithmic}[1]
        \REQUIRE Original adjacency matrix $\mathbf{A}$, perturbation budget $\Delta$.
        \ENSURE Attacked adjacency matrix $\hat{\mathbf{A}}$.
        \STATE Initialize $\hat{\mathbf{A}} \gets \mathbf{A}$.
        \FOR{$k=1$ to $\Delta$}
            \STATE Calculate gradient matrix $\mathbf{G} \gets \nabla_{\hat{\mathbf{A}}} \mathcal{D}_3(\hat{\mathcal{G}})$.
            \STATE Find candidate edges $C_e = \{ (i,j) | \hat{a}_{ij} \neq 0 \wedge \text{sign}(\hat{a}_{ij}) = \text{sign}(G_{ij}) \}$.
            \IF{$C_e$ is empty}
                \STATE \textbf{break}
            \ENDIF
            \STATE Select $(i^*, j^*) = \mathop{\arg \max}_{(i,j) \in C_e} |G_{ij}|$.
            \STATE Flip edge sign: $\hat{a}_{i^*j^*} \gets -\hat{a}_{i^*j^*}$.
        \ENDFOR
        \STATE \textbf{return} $\hat{\mathbf{A}}$.
    \end{algorithmic}
\end{algorithm}

\section{Proposed Robust Model}
This section details our defense against the previously described threat model. We first analyze a naive defense to reveal a fundamental challenge, then present our robust model, BA-SGCL, which is designed to overcome this challenge.

\subsection{Our Preliminary Analysis}
As established in our analysis of balance-attack (Section V) and demonstrated by similar methods like FlipAttack \cite{zhu2024towards}, a common trait of these attacks is the significant reduction of the graph's balance degree. An intuitive first line of defense—analogous to structural learning for unsigned graphs—is an approach we term \textit{balance learning}. While structural learning refines the graph's topology, balance learning focuses exclusively on refining edge signs. Specifically, this approach treats the signs as learnable variables and iteratively updates them with the objective of maximizing the overall graph balance degree.

\begin{table}[!ht]
\centering
\caption{Comparison of SGCN without/with \textit{balance learning} under balance-attack (ratio: overlapping ratio of graphs; $\mathcal{D}_3$: balance degree).}
\resizebox{\linewidth}{!}{
\begin{tabular}{c c ccc ccc}
\toprule
\multirow{2}{*}{\textbf{Dataset}} & \multirow{2}{*}{\textbf{Ptb(\%)}} & \multicolumn{3}{c}{\textbf{SGCN}} & \multicolumn{3}{c}{\textbf{SGCN + \textit{balance learning}}} \\
\cmidrule(lr){3-5} \cmidrule(lr){6-8}
& & \multicolumn{1}{c}{AUC} & \multicolumn{1}{c}{ratio(\%)} & \multicolumn{1}{c}{$\mathcal{D}_3$} & \multicolumn{1}{c}{AUC} & \multicolumn{1}{c}{ratio(\%)} & \multicolumn{1}{c}{$\mathcal{D}_3$} \\
\midrule
\multirow{2}{*}{BitcoinAlpha} & 10 & 0.6917          & \textbf{89.98} & 0.2006 & \textbf{0.6962} & 84.77          & \textbf{0.9856} \\
                              & 20 & \textbf{0.6532} & \textbf{79.94} & 0.1054 & 0.6153          & 63.82          & \textbf{0.9616} \\
\cmidrule{1-8}
\multirow{2}{*}{BitcoinOTC}   & 10 & \textbf{0.7508} & \textbf{89.99} & 0.2072 & 0.7324          & 79.32          & \textbf{0.9598} \\
                              & 20 & \textbf{0.6982} & \textbf{79.98} & 0.0881 & 0.6687          & 64.92          & \textbf{0.9335} \\
\cmidrule{1-8}
\multirow{2}{*}{Slashdot}     & 10 & \textbf{0.6897} & \textbf{89.78} & 0.2345 & 0.6668          & 84.13          & \textbf{0.9436} \\
                              & 20 & \textbf{0.6344} & \textbf{79.96} & 0.1472 & 0.6092          & 68.23          & \textbf{0.9031} \\
\cmidrule{1-8}
\multirow{2}{*}{Epinions}     & 10 & \textbf{0.7387} & \textbf{89.97} & 0.3889 & 0.7253          & 88.58          & \textbf{0.9384} \\
                              & 20 & \textbf{0.6885} & \textbf{79.83} & 0.2197 & 0.6824          & 75.92          & \textbf{0.9081} \\
\bottomrule
\end{tabular}
}
\label{balance_learning_balance-attack_short}
\end{table}

We evaluated the performance of balance learning under balance-attack, with results shown in Table \ref{balance_learning_balance-attack_short}. The findings are striking: while balance learning effectively restores a high balance degree ($\mathcal{D}_3$), it consistently fails to improve, and often degrades, the model's predictive performance (AUC). Moreover, balance learning reduces the overlapping ratio between poisoned and clean graphs, indicating its inability to accurately recover original edge signs.

The failure of balance learning primarily stems from the fact that disparate sign distributions can yield identical balance degrees (illustrated by a toy example in Fig. \ref{Irreversibility}). We term this phenomenon the ``Irreversibility of Balance-related Information," which renders it challenging to accurately reconstruct the original sign distribution of the clean graph using balance degree as the sole guiding metric. As evidenced in Table \ref{balance_learning_balance-attack_short}, despite the restoration of a high balance degree, the resultant sign distribution remains significantly divergent from that of the clean graph. This insight motivates the need for a more sophisticated defense mechanism.

\begin{figure}[!t] \centering
\centerline{\includegraphics[width=0.4\textwidth]{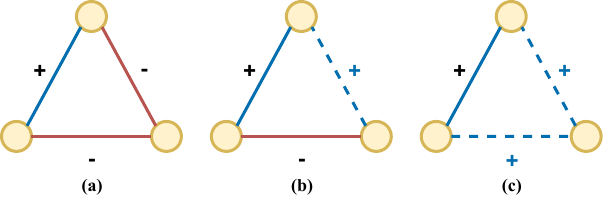}}
\caption{An example of the Irreversibility of Balance-related Information challenge. (a) The initial balanced graph; (b) The unbalanced graph after attack; (c) A recovered graph, which is balanced but has a different sign distribution from the graph in (a).}
\label{Irreversibility}
\end{figure}

\begin{figure*}[!t]
    \centering
    \includegraphics[width=0.8\textwidth]{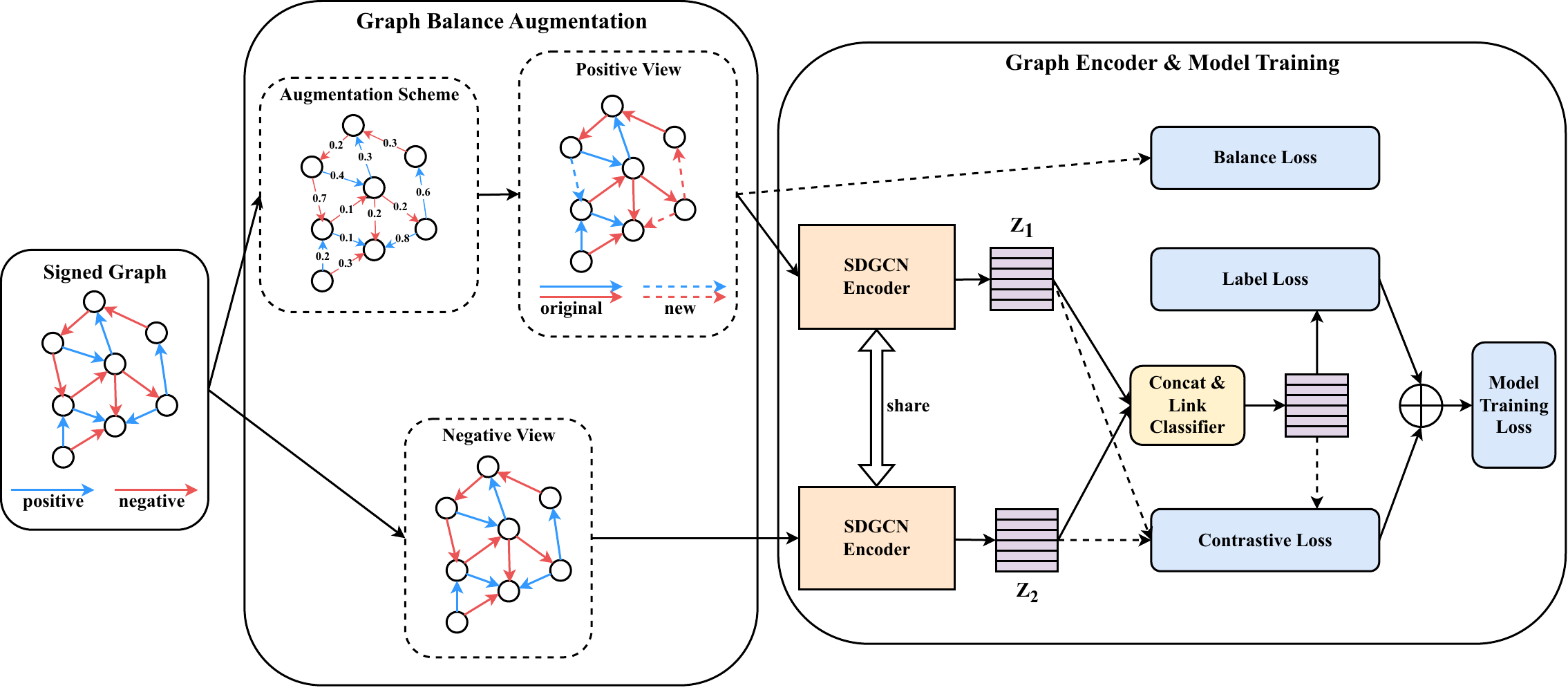}
    \captionsetup{justification=centering}
    \caption{The Overview of BA-SGCL.}
    \label{basgcl-overview}
\end{figure*}

\subsection{Method Overview}
The primary challenge in defending SGNNs is the ``Irreversibility of Balance-related Information,'' which makes directly recovering the original clean graph an ineffective strategy. To overcome this, we propose \textbf{Balance Augmented-Signed Graph Contrastive Learning (BA-SGCL)}, a model that bypasses direct graph recovery. Instead, its core idea is to learn robust node embeddings that are implicitly characterized by a high degree of balance.  BA-SGCL achieves this through a novel contrastive framework: it treats the input poisoned graph (low balance) as a negative view and uses a learnable augmenter to generate a corresponding positive view with an enhanced balance degree. A shared encoder is then trained to maximize the agreement between these two views. This process, jointly optimized with a supervised loss for the primary prediction task, forces the model to learn representations that are resilient to adversarial perturbations, as illustrated in Fig.~\ref{basgcl-overview}. In the following sections, we detail the model, explore the theoretical foundations of the attacks, and discuss the theoretical intuitions, framed from an information-theoretic perspective, that underpin our model's design for robust representation learning.

\subsection{Learnable Balance Augmentation}
Despite the difficulty in accurately restoring the clean graph to achieve a highly balanced state, we mitigate this issue via balance augmentation leveraging a GCL framework \cite{zhu2021graph,suresh2021adversarial}. GCL primarily relies on generating pairs of positive and negative views for self-supervised learning. In the context of defending against poisoning attacks, we only have access to the poisoned graph $\hat{\mathbf{A}}$, where the attacker has already reduced the balance degree. This poisoned graph can serve as the negative view. To generate a positive view with an enhanced balance degree, we introduce a novel balance augmentation technique that involves flipping edge signs on the poisoned graph.

Specifically, we learn a Bernoulli distribution to determine which edge sign flips are likely to increase the balance degree. Let $\mathbf{\Delta}=[\Delta_{ij}]_{n \times n}\in [0, 1]^{n \times n}$ denote the probability matrix for flipping signs. The key to our balance augmentation is learning the optimal $\mathbf{\Delta}$. We represent the Bernoulli distribution for flipping edge $(i,j)$ as $\mathcal{B}(\Delta_{ij})$. Then, we can sample a sign perturbation mask $\mathbf{E} \in \{0, 1\}^{n \times n}$, where $E_{ij} \sim \mathcal{B}(\Delta_{ij})$ indicates whether to flip the sign of edge $(i,j)$. The adjacency matrix of the sampled augmented positive view, $\mathbf{A}_p$, can be represented as follows:
\begin{equation}
\label{eqn:SignedAugmentation}
    \mathbf{A}_p = \hat{\mathbf{A}} + \mathbf{C} \circ \mathbf{E}, \quad \text{where} \quad \mathbf{C} = -2 \hat{\mathbf{A}}.
\end{equation}
The probability matrix $\mathbf{\Delta}$ is learned by minimizing the negative balance degree of the expected augmented graph. This objective, which we will later refer to as the balance loss, is defined as:
\begin{equation}
\label{eqn:balance_objective_function}
    \mathcal{L}_{\text{balance}} = -\mathcal{D}_3(\mathbb{E}[\mathbf{A}_p]) = -\mathcal{D}_3(\hat{\mathbf{A}} - 2\hat{\mathbf{A}} \circ \mathbf{\Delta}).
\end{equation}
Minimizing this term aims to find a probability distribution $\mathbf{\Delta}$ that, on average, generates a positive view with the highest possible balance degree. To preserve the integrity of the graph structure during this defense-side augmentation, we introduce a constraint set $\Phi_D(\hat{\mathbf{A}})$ that limits the maximum number of edge sign flips. In practice, we implement this budget by selecting the top $n_D\%$ of $\Delta_{ij}$ values to sample $E_{ij}$, where $n_D\%$ is a hyperparameter.

\subsection{Design of Loss Function}
The overall objective function for training BA-SGCL comprises three key components: the contrastive loss $\mathcal{L}_{\text{con}}$ (for learning robust representations by comparing different graph views), the label loss $\mathcal{L}_{\text{label}}$ (for the primary task of link sign prediction), and the balance loss $\mathcal{L}_{\text{balance}}$ (for optimizing the learnable augmentation scheme). The encoder's parameters $\boldsymbol{\theta}$ are optimized using a combination of $\mathcal{L}_{\text{con}}$ and $\mathcal{L}_{\text{label}}$, while $\mathcal{L}_{\text{balance}}$ is specifically used to train the probability matrix $\mathbf{\Delta}$ that governs the balance augmentation.

\subsubsection{Contrastive loss}
The contrastive objective aims to align latent representations of identical nodes while differentiating them from other nodes. Nodes that are identical across different graph views constitute inter-positive pairs, while all other node combinations form inter-negative pairs. For instance, node $u$ in $\mathcal{G}_1$ and its counterpart in $\mathcal{G}_2$ create an inter-positive pair. Conversely, node $u$ in $\mathcal{G}_1$ and any node ${v \in \mathcal{V}; v \neq u}$ in $\mathcal{G}_2$ form an inter-negative pair. The inter-view objective seeks to maximize similarity within positive pairs while minimizing it among negative pairs. The inter-view loss function is defined as:
\begin{equation}
    \mathcal{L}_{\text{inter}} = \frac{1}{|\mathcal{V}|} \sum_{u \in \mathcal{V}} \log \frac{\exp((\mathbf{z}_1^u \cdot \mathbf{z}_2^u)/\tau)}{\sum_{v \in \mathcal{V}} \exp((\mathbf{z}_1^u \cdot \mathbf{z}_2^v)/\tau)}.
    \label{L_inter}
\end{equation}
where $\mathbf{z}_1^u$ denotes the low-dimensional embedding vector of node $u$ in view $1$, while $\mathbf{z}_2^u$ represents the corresponding embedding in view $2$.

Unlike the inter-view loss which compares node representations across different graph views, the intra-view loss calculates the discriminative loss within a single graph view. It is crucial for ensuring that each node's latent representation is distinct, reflecting its unique characteristics. The primary goal is to enhance the differentiation among all node representations. The intra-view loss is mathematically defined as:
\begin{equation}
    \mathcal{L}_{\text{intra}} = \frac{1}{K} \sum_{k=1}^{K} \frac{1}{|\mathcal{V}|} \sum_{u \in \mathcal{V}} \log \frac{1}{\sum_{v \in \mathcal{V},u \neq v} \exp((\mathbf{z}_k^u \cdot \mathbf{z}_k^v)/\tau)},
    \label{L_intra}
\end{equation}
where $k$ indicates the graph view index.

The contrastive loss is the sum of the inter-view and intra-view loss functions:
\begin{equation}
    \mathcal{L}_{\text{con}} = \mathcal{L}_{\text{inter}} + \mathcal{L}_{\text{intra}}.
    \label{L_con}
\end{equation}

\subsubsection{Label Loss}
The graph encoder processes the two views (the poisoned graph and the augmented positive view), producing node representations $\mathbf{Z}_1$ and $\mathbf{Z}_2$. These representations are then concatenated and passed through an output layer to yield the final node embeddings $\mathbf{R}$ for prediction tasks:
\begin{equation}
    \mathbf{R} = \sigma([\mathbf{Z}_1 || \mathbf{Z}_2]\mathbf{W}^{\text{out}}+\mathbf{B}^{\text{out}}).
    \label{embedding}
\end{equation}
Specifically, after generating the final representations $\mathbf{r}_u \in \mathbf{R}$ for all nodes, we utilize a 2-layer MLP to predict the raw sign score $\hat{s}_{ij}$ for an edge $(i,j)$:
\begin{equation}
    \hat{s}_{ij} = \text{MLP}([\mathbf{r}_i || \mathbf{r}_j]).
\end{equation}
The loss function for link sign prediction is the binary cross-entropy. For this purpose, the ground truth signs $sign(e_{ij}) \in \{+, -\}$ are mapped to binary labels $y_{ij} \in \{1, 0\}$ (e.g., $1$ for a positive sign and $0$ for a negative sign). The loss is then defined as:
\begin{equation}
\begin{aligned}
    \mathcal{L}_{\text{label}} = & -\frac{1}{|\Omega_{\text{train}}|} \sum_{(i,j) \in \Omega_{\text{train}}} [y_{ij}\log\sigma(\hat{s}_{ij}) \\
    & + (1-y_{ij})\log(1-\sigma(\hat{s}_{ij}))],
    \label{L_label}
\end{aligned}
\end{equation}
where $\Omega_{\text{train}}$ is the set of labeled edges in the training data, and $\sigma(\cdot)$ is the sigmoid function.

\subsubsection{Balance Loss}
The balance loss $\mathcal{L}_{\text{balance}}$ is identical to $\mathcal{L}_{\text{ptb}}$ defined in Eq.~\eqref{eqn:balance_objective_function} (i.e., the negative balance degree). This loss is specifically used to optimize the parameters of the learnable augmenter, namely the probability matrix $\mathbf{\Delta}$, to encourage the generation of positive views with an enhanced balance degree.

\begin{algorithm}[!t]
\caption{BA-SGCL Training Algorithm}
\label{alg:basgcl}
    \renewcommand{\algorithmicrequire}{\textbf{Input:}}
    \renewcommand{\algorithmicensure}{\textbf{Output:}}
    \begin{algorithmic}[1]
        \REQUIRE Poisoned adjacency matrix $\hat{\mathbf{A}}$, training labels $\Omega_{\text{train}}$, hyperparameters $\alpha, \lambda_{\text{intra}}, n_D\%$.
        \ENSURE Trained encoder parameters $\boldsymbol{\theta}$.
        \STATE Initialize encoder parameters $\boldsymbol{\theta}$ and probability matrix $\mathbf{\Delta}$.
        \FOR{each training epoch}
            \STATE // Generate positive and negative views
            \STATE Sample perturbation mask $\mathbf{E}$ from $\mathcal{B}(\mathbf{\Delta})$ with budget $n_D\%$.
            \STATE Construct positive view $\mathbf{A}_p \leftarrow \hat{\mathbf{A}} - 2\hat{\mathbf{A}} \circ \mathbf{E}$.
            \STATE Let negative view $\mathbf{A}_n \leftarrow \hat{\mathbf{A}}$.
            \STATE 
            \STATE // Pass views through the shared encoder
            \STATE $\mathbf{Z}_n \leftarrow f_{\boldsymbol{\theta}}(\mathbf{A}_n)$, $\mathbf{Z}_p \leftarrow f_{\boldsymbol{\theta}}(\mathbf{A}_p)$.
            \STATE
            \STATE // Compute losses by referencing their definitions
            \STATE Compute contrastive loss $\mathcal{L}_{\text{con}}$ via Eqs.~\eqref{L_inter}, \eqref{L_intra}, and \eqref{L_con}.
            \STATE Compute label loss $\mathcal{L}_{\text{label}}$ via Eq.~\eqref{L_label}.
            \STATE Compute overall model loss $\mathcal{L}$ via Eq.~\eqref{L_model_training}.
            \STATE Compute balance loss $\mathcal{L}_{\text{balance}}$ via Eq.~\eqref{eqn:balance_objective_function}.
            \STATE
            \STATE // Update parameters concurrently
            \STATE Update encoder $\boldsymbol{\theta}$ using $\nabla_{\boldsymbol{\theta}}\mathcal{L}$.
            \STATE Update augmenter $\mathbf{\Delta}$ using $\nabla_{\mathbf{\Delta}}\mathcal{L}_{\text{balance}}$.
        \ENDFOR
        \STATE \textbf{return} $\boldsymbol{\theta}$.
    \end{algorithmic}
\end{algorithm}

\subsection{Model Training}
For our encoder, we employ SDGCN \cite{ko2023spectral}, the current state-of-the-art SGNN encoder. SDGCN distinguishes itself by overcoming the constraints of traditional graph Laplacians and leveraging complex number representations to capture both sign and directional information of edges in signed graphs.

Contrastive learning can be conceptualized as a regularization mechanism for the target task. Consequently, we update the model encoder's parameters using a combined objective, defined as:
\begin{equation}
    \mathcal{L} = \alpha \times \mathcal{L}_{\text{con}} + \mathcal{L}_{\text{label}}.
    \label{L_model_training}
\end{equation}
The training process involves jointly optimizing the encoder and the learnable augmenter. In each training iteration, two update steps are performed concurrently: 1) The encoder's parameters $\boldsymbol{\theta}$ are updated using the combined loss $\mathcal{L}$ from Eq.~\eqref{L_model_training}. 2) The augmenter's parameters, the probability matrix $\mathbf{\Delta}$, are updated using the balance loss $\mathcal{L}_{\text{balance}}$. This unified approach avoids the significant computational overhead of a sequential, multi-stage training process. The complete algorithm is detailed in Alg.~\ref{alg:basgcl}.

\section{Theoretical Analysis}
This section examines adversarial attacks from a mutual information (MI) perspective \cite{kraskov2004estimating,xu2021infogcl} and establishes the theoretical underpinnings of our proposed defense framework.

\begin{proposition}[Attack Principle from an MI Perspective]
\label{prop:1}
Balance-related adversarial attacks aim to degrade model performance by corrupting the graph's balance information, thereby minimizing the mutual information between this information and the ground-truth labels.
\end{proposition}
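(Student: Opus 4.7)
The plan is to introduce a random variable $B$ that captures the graph's balance information, for instance the vector of parities of negative-edge counts across all length-three cycles (the same object summed in Eq.~\eqref{eqn:BalanceDegree}). Let $Y$ denote the signs of the query edges and $\hat{Y}$ the SGNN's predictions. Since every leading SGNN aggregator (SGCN, SDGNN, SDGCN) is explicitly derived from balance theory, the informative content the encoder extracts about $Y$ is mediated by $B$; writing this as the Markov chain $Y \to B \to \hat{Y}$ provides the structural assumption the rest of the argument rests on. The objective is then to show that minimizing $\mathcal{D}_3(\hat{\mathcal{G}})$, as in Eq.~\eqref{eqn:BalanceOptimization}, reduces $I(B;Y)$ and, through it, an achievable lower bound on the SGNN's attainable error.

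First I would relate $\mathcal{D}_3$ to $I(B;Y)$. On a clean graph, balance theory dictates that the signs within a triad are mutually constrained, so observing two sides of a balanced triad nearly determines the third; this makes $H(Y \mid B)$ small and $I(B;Y) = H(Y) - H(Y \mid B)$ large. Flipping an edge sign to break a balanced triad replaces a deterministic constraint with an inconsistent one, strictly increasing $H(Y \mid B)$ in expectation. An argument along these lines, summed over all triads counted by Eq.~\eqref{eqn:BalanceDegree}, should yield the monotone relationship ``$\mathcal{D}_3 \downarrow \Rightarrow I(B;Y) \downarrow$'' that the proposition asserts.

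Next I would close the loop from information to accuracy with two standard tools. The data processing inequality along $Y \to B \to \hat{Y}$ gives $I(\hat{Y};Y) \le I(B;Y)$, so the predictive information the SGNN can express is capped by what remains in $B$ after the attack. Fano's inequality, $P(\hat{Y} \ne Y) \ge (H(Y \mid \hat{Y}) - 1)/\log|\mathcal{Y}| = (H(Y) - I(\hat{Y};Y) - 1)/\log|\mathcal{Y}|$, then translates any reduction of $I(B;Y)$ into an increase in the unavoidable error probability, matching the attacker's bi-level goal in Eq.~\eqref{eqn:BilevelOptimization}. Composing these bounds shows that the balance-degree surrogate of Eq.~\eqref{eqn:BalanceOptimization} is, up to constants, an MI-reduction objective against $Y$.

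The hard part will be justifying the Markov factorization $Y \to B \to \hat{Y}$ at the level of real SGNN encoders, which operate on the full adjacency matrix rather than on a triad summary. The cleanest remedy is to treat $B$ as a \emph{sufficient statistic for the balance-theoretic channel} of the encoder and argue separately that the remaining topological signal is unaffected, because the attack formalized in Section~V preserves topology and only flips existing signs. A secondary subtlety is that the monotonicity of $H(Y \mid B)$ in $\mathcal{D}_3$ is only an in-expectation statement over flip locations; handling it rigorously requires averaging over the attacker's flip distribution rather than reasoning per-flip, since a single flip may leave particular local triads untouched while still lowering the global balance degree.
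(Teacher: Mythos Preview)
Your proposal is a defensible sketch at the same level of rigor as the paper's, but it takes a genuinely different route. You build a Markov chain $Y \to B \to \hat Y$, then invoke the data-processing inequality and Fano to push an $I(B;Y)$ decrease through to prediction error. The paper instead splits the encoder's input into a \emph{structural} channel $h_{\mathcal A}=g_1(\mathbf A_{\text{abs}})$ and a \emph{balance} channel $h_{\hat{\mathcal B}}=g_2(\hat{\mathbf S})$ and applies the MI chain rule,
\[
I((h_{\mathcal A},h_{\hat{\mathcal B}});Y)=I(h_{\mathcal A};Y)+I(h_{\hat{\mathcal B}};Y\mid h_{\mathcal A}),
\]
observing that the first term is constant because the attack only flips signs and leaves topology (and labels) fixed. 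The attacker's objective therefore collapses to minimizing the conditional term $I(h_{\hat{\mathcal B}};Y\mid h_{\mathcal A})$, with no Markov assumption on the encoder required.

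This matters precisely at the point you flag as ``the hard part.'' You need $Y\to B\to\hat Y$ so that DPI applies, and you correctly note that real SGNNs consume the full adjacency, not a triad summary; your proposed fix (treat $B$ as sufficient for the ``balance-theoretic channel'' and argue the topological residual is unaffected) is essentially an informal version of the paper's decomposition. The paper's chain-rule argument makes this separation explicit and gets the conclusion without ever positing that the prediction factors through $B$. What your route buys in exchange is a more explicit link from MI to error via Fano, which the paper does not attempt; it stops at ``the attack minimizes the conditional MI.'' Your monotonicity argument $\mathcal D_3\downarrow\Rightarrow I(B;Y)\downarrow$ is also an addition the paper leaves implicit. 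So: your approach is more ambitious on the MI-to-accuracy side but rests on a structural assumption the paper's decomposition neatly avoids.
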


\begin{proof}[Proof Sketch]
Let us conceptualize the information in a signed graph $\mathcal{G}$ as two channels: the topological structure (which edges exist), represented by an unweighted adjacency matrix $\mathbf{A}_{\text{abs}}$, and the sign information, $\mathbf{S}$. The balance-related information, $\mathcal{B}$, is a function of both. An SGNN model $f_{\boldsymbol{\theta}}$ aims to maximize the MI between its learned embeddings and the true labels $Y$, i.e., $\max_{\boldsymbol{\theta}} I(f_{\boldsymbol{\theta}}(\mathbf{A}_{\text{abs}}, \mathbf{S}); Y)$.

A balance-attack perturbs the signs $\mathbf{S}$ to $\hat{\mathbf{S}}$ while keeping the topology $\mathbf{A}_{\text{abs}}$ fixed. This changes the balance information from $\mathcal{B}$ to $\hat{\mathcal{B}}$. The attacker's objective is to minimize the model's performance:
\begin{equation}
    \mathop{\arg\min}_{\hat{\mathbf{S}}} I(f_{\boldsymbol{\theta}}(\mathbf{A}_{\text{abs}}, \hat{\mathbf{S}}); Y).
\end{equation}
Let $h_{\mathcal{A}} = g_1(\mathbf{A}_{\text{abs}})$ and $h_{\hat{\mathcal{B}}} = g_2(\hat{\mathbf{S}})$ be representations of the structure and the perturbed balance information, respectively. Using the chain rule for mutual information, we can decompose the objective:
\begin{equation}
\label{eq:mi_chain_rule_attack}
    I((h_{\mathcal{A}}, h_{\hat{\mathcal{B}}});Y) = I(h_{\mathcal{A}};Y) + I(h_{\hat{\mathcal{B}}};Y|h_{\mathcal{A}}).
\end{equation}
During the attack, the topology $\mathbf{A}_{\text{abs}}$ and the labels $Y$ are fixed. Therefore, the term $I(h_{\mathcal{A}};Y)$ is a constant. To minimize the entire expression, the attacker must focus on minimizing the second term. Thus, the attack objective simplifies to:
\begin{equation}
    \mathop{\arg\min}_{\hat{\mathbf{S}}} I(h_{\hat{\mathcal{B}}};Y|h_{\mathcal{A}}).
\end{equation}
This confirms that the attack's essence is to minimize the conditional MI between the perturbed balance information and the labels, given the fixed graph structure.
\end{proof}

\begin{proposition}[BA-SGCL Defense Principle]
\label{prop:2}
BA-SGCL learns robust embeddings by jointly optimizing lower bounds on two mutual information objectives: (1) the MI between the poisoned graph view and the high-balance augmented view, and (2) the MI between the final embeddings and the ground-truth labels.
\end{proposition}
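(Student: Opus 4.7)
The plan is to show that the two components of the encoder's training objective, $\mathcal{L}_{\text{con}}$ and $\mathcal{L}_{\text{label}}$ in Eq.~\eqref{L_model_training}, are each (the negative of) a variational lower bound on one of the two claimed mutual information quantities, so that jointly minimizing them amounts to jointly maximizing both bounds. This puts the defense on the same information-theoretic footing as the attack characterized in Proposition~\ref{prop:1}.

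First, I would handle the contrastive term. The inter-view loss of Eq.~\eqref{L_inter} is exactly an InfoNCE-style objective where, for each anchor node $u$, the positive key is its own embedding in the other view and the negatives are the other nodes' embeddings. Invoking the standard InfoNCE lower bound, I would establish $-\mathcal{L}_{\text{inter}} \le I(\mathbf{Z}_1;\mathbf{Z}_2) - \log|\mathcal{V}|$, where $\mathbf{Z}_1 = f_{\boldsymbol{\theta}}(\hat{\mathbf{A}})$ and $\mathbf{Z}_2 = f_{\boldsymbol{\theta}}(\mathbf{A}_p)$ are the node embeddings of the poisoned (negative) view and of the high-balance augmented (positive) view defined in Eq.~\eqref{eqn:SignedAugmentation}. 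Since the encoder is deterministic, the data processing inequality yields $I(\mathbf{Z}_1;\mathbf{Z}_2) \le I(\hat{\mathbf{A}};\mathbf{A}_p)$, so minimizing $\mathcal{L}_{\text{inter}}$ tightens a lower bound on the MI between the two views, which is objective~(1). The intra-view term $\mathcal{L}_{\text{intra}}$ in Eq.~\eqref{L_intra} plays the role of an entropy-style regularizer that prevents representational collapse within a view, and I would argue that it only sharpens the InfoNCE bound, so its addition in Eq.~\eqref{L_con} preserves the variational-lower-bound interpretation.

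Second, for the label objective, I would rely on the standard identification of binary cross-entropy with an upper bound on the conditional entropy of the label given the predictor. Writing $\hat{s}$ for the score produced from the final embedding $\mathbf{R}$, Eq.~\eqref{L_label} satisfies $\mathcal{L}_{\text{label}} \ge H(Y\mid \hat{s})$, and by the data processing inequality $H(Y\mid\mathbf{R}) \le H(Y\mid \hat{s})$, since $\hat{s}$ is a deterministic function of $\mathbf{R}$. Because $H(Y)$ does not depend on $\boldsymbol{\theta}$, the identity $I(\mathbf{R};Y) = H(Y) - H(Y\mid\mathbf{R})$ then shows that minimizing $\mathcal{L}_{\text{label}}$ maximizes a lower bound on $I(\mathbf{R};Y)$, which is objective~(2). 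Combining both pieces, the joint objective in Eq.~\eqref{L_model_training} is, up to additive constants and a positive scaling, the negative of a sum of two variational lower bounds, so its minimization jointly maximizes lower bounds on $I(\hat{\mathbf{A}};\mathbf{A}_p)$ and $I(\mathbf{R};Y)$. The role of $\mathcal{L}_{\text{balance}}$ in Eq.~\eqref{eqn:balance_objective_function} is complementary: it biases the positive view $\mathbf{A}_p$ toward high balance, so that the shared information the encoder is trained to preserve is explicitly the balance-related information that Proposition~\ref{prop:1} identifies as the attacker's target.

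The main obstacle will be making the InfoNCE step rigorous in the presence of the \emph{learnable} augmenter $\mathbf{\Delta}$: the positive samples $\mathbf{A}_p$ are drawn from a distribution that is itself being optimized, so the standard InfoNCE derivation, which assumes a fixed joint distribution over positive pairs, does not apply off the shelf. I would address this by leveraging the two-timescale structure of Algorithm~\ref{alg:basgcl}, in which $\boldsymbol{\theta}$ is updated by $\nabla_{\boldsymbol{\theta}}\mathcal{L}$ while $\mathbf{\Delta}$ is updated only by $\nabla_{\mathbf{\Delta}}\mathcal{L}_{\text{balance}}$. Conditional on the current $\mathbf{\Delta}$, the positive-pair distribution is fixed, so the InfoNCE bound holds at every encoder step; the $\mathbf{\Delta}$ update then refines which balance-enhanced distribution the bound is taken with respect to, without breaking the bound itself. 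A secondary subtlety is justifying that the intra-view term does not invert the direction of the bound; I would treat it as a regularizer that enforces high marginal entropy of $\mathbf{Z}_k$, which is the precise quantity that appears as the normalizing term in tighter InfoNCE-like lower bounds.
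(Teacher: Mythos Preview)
Your proposal is correct and follows essentially the same approach as the paper's proof sketch: the paper invokes the InfoNCE lower bound $I(\mathbf{Z}_1;\mathbf{Z}_2)\ge \log(n)-\mathcal{L}_{\text{inter}}$ for the contrastive term and cites the standard BCE/log-likelihood connection to $I(\mathbf{R};Y)$ for the label term, then concludes that minimizing $\alpha\mathcal{L}_{\text{con}}+\mathcal{L}_{\text{label}}$ jointly maximizes both lower bounds. Your version adds rigor the paper omits---the data-processing step from $I(\mathbf{Z}_1;\mathbf{Z}_2)$ to $I(\hat{\mathbf{A}};\mathbf{A}_p)$, the explicit $H(Y\mid\mathbf{R})\le H(Y\mid\hat{s})\le \mathcal{L}_{\text{label}}$ chain, the treatment of $\mathcal{L}_{\text{intra}}$ as an entropy regularizer, and the two-timescale justification for the moving augmenter---but none of this departs from the paper's line of argument; it simply fills in gaps the paper's sketch leaves open.
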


\begin{proof}[Proof Sketch]
Our defense framework is trained by minimizing two main loss components, which can be theoretically linked to maximizing two MI objectives. Let $\mathbf{Z}_1$ be the embeddings from the positive view (high-balance augmented graph) and $\mathbf{Z}_2$ be the embeddings from the negative view (poisoned graph).

First, the contrastive loss $\mathcal{L}_{\text{con}}$ aims to maximize the MI between the two views, $I(\mathbf{Z}_1; \mathbf{Z}_2)$. The InfoNCE estimator used in $\mathcal{L}_{\text{inter}}$ provides a variational lower bound on this mutual information \cite{oord2018representation}. Specifically, the relationship is given by:
\begin{equation}
    I(\mathbf{Z}_1; \mathbf{Z}_2) \geq \log(n) - \mathcal{L}_{\text{inter}}.
\end{equation}
Therefore, by minimizing the contrastive loss $\mathcal{L}_{\text{con}}$, we are effectively maximizing a lower bound on the mutual information between the two views. This forces the encoder to learn representations from the poisoned graph ($\mathbf{Z}_2$) that are invariant to the balance-restoring augmentations in the positive view ($\mathbf{Z}_1$), thereby capturing robust, high-balance characteristics.

Second, the label loss $\mathcal{L}_{\text{label}}$ aims to make the final representations predictive. Let $\mathbf{R}$ be the final embedding used for prediction (as defined in Eq.~\eqref{embedding}). Minimizing the binary cross-entropy loss is equivalent to maximizing the conditional log-likelihood of the labels given the representation, $\log P(Y|\mathbf{R})$. This objective is known to maximize a lower bound on the mutual information between the representation and the labels, $I(\mathbf{R}; Y)$ \cite{poole2019variational}.

By jointly minimizing $\mathcal{L} = \alpha \mathcal{L}_{\text{con}} + \mathcal{L}_{\text{label}}$, our BA-SGCL framework effectively maximizes the lower bounds of both MI terms. This ensures that the learned embeddings are simultaneously robust to adversarial perturbations (via contrastive learning) and predictive for the downstream task (via supervised learning).
\end{proof}

\section{Experiments}

We first evaluate the effectiveness of balance-attack against $9$ state-of-the-art signed graph representation methods. Our investigation addresses the following research questions:

\begin{itemize}
\item \textbf{Q1}: How effectively does balance-attack reduce the structural balance in signed graphs?
\item \textbf{Q2}: How does balance-attack's performance compare to random attack baselines when targeting state-of-the-art SGNN models?
\item \textbf{Q3}: To what extent is balance-attack generalizable across different SGNN architectures and frameworks?
\end{itemize}

The second set of experiments evaluates link sign prediction performance, comparing our proposed BA-SGCL against nine state-of-the-art SGNN baselines under various signed graph adversarial attacks, including balance-attack. For this analysis, we focus on:

\begin{itemize}
\item \textbf{Q4}: How does BA-SGCL's robustness compare to existing SGNN methods when subjected to different signed graph adversarial attacks?
\item \textbf{Q5}: What advantages does balance augmentation in BA-SGCL offer over the random augmentation strategy employed in standard SGCL?
\end{itemize}

\begin{table}[h] \centering
\caption{Dataset Statistics.}
\resizebox{\linewidth}{!}{
\begin{tabular}{cccccc}
\toprule
\textbf{Dataset} & \textbf{\#Nodes} & \textbf{\#Pos-Edges} & \textbf{\#Neg-Edges} & \textbf{\%Pos-Ratio} & \textbf{\%Density} \\
\midrule
Bitcoin-Alpha & 3,784  & 22,650  & 1,536   & 93.65 & 0.3379 \\
Bitcoin-OTC   & 5,901  & 32,029  & 3,563   & 89.99 & 0.2045 \\
Slashdot      & 33,586 & 295,201 & 100,802 & 74.55 & 0.0702 \\
Epinions      & 16,992 & 276,309 & 50,918  & 84.43 & 0.2266 \\
\bottomrule
\end{tabular}}
\label{dataset}
\end{table}

\subsection{Datasets}

Experiments are conducted on $4$ public real-world datasets: Bitcoin-Alpha, Bitcoin-OTC \cite{kumar2016edge}, Epinions \cite{guha2004propagation}, and Slashdot \cite{guo2017hermitian}. The Bitcoin-Alpha and Bitcoin-OTC datasets, collected from Bitcoin trading platforms, are publicly available. These platforms allow users to label others as either trustworthy (positive) or untrustworthy (negative), serving as a mechanism to prevent fraudulent transactions in these anonymous trading environments. Slashdot, a prominent technology-focused news website, features a unique user community where members can designate others as friends or foes based on their interactions. Epinions, an online social network centered around a consumer review site, allows users to establish trust relationships with other members.

For experiments, $80\%$ of the links are randomly selected as the training set, with the remaining $20\%$ serving as the test set. As the datasets lack attributes, each node is assigned a randomly generated 64-dimensional vector as its initial attribute. Detailed dataset statistics are presented in Table \ref{dataset}.

\subsection{Setup}
The experimental settings are divided into two subsections.

\subsubsection{Attack Setup}
Following established practices in the signed graph literature, we set the embedding dimension at $64$ for all SGNN models to ensure a fair comparison. We evaluate two attack strategies. Our proposed balance-attack is a gradient-guided greedy algorithm that iteratively flips the sign of the edge estimated to cause the largest decrease in balance degree. This is compared against a random attack baseline, which flips the signs of an equal number of edges selected uniformly at random. For both attacks, we examine perturbation rates ranging from $5\%$ to $20\%$ of the total edges. Model performance is evaluated using four standard metrics from the signed graph literature \cite{huang2021sdgnn, zhang2023rsgnn}: AUC, Micro-F1, Binary-F1, and Macro-F1. Lower values for these metrics indicate more severe performance degradation and thus stronger attack effectiveness.

\subsubsection{Defense Setup}
For our BA-SGCL implementation in PyTorch, we set the learning rate to $0.001$. For the learnable balance augmenter, we set the defense augmentation budget ($n_D\%$) to a value that typically results in the positive view's balance degree exceeding $0.9$. We preserve the original parameter settings for all baseline methods to ensure a fair comparison. We employ the same evaluation metrics as in the attack experiments; however, higher values now indicate superior defense performance. In all result tables, bold and underlined values represent the best and second-best performances, respectively.

\subsection{Baselines}
The baselines are divided into two parts, focused on attacks and defenses, respectively.
\subsubsection{Victim Models}
We evaluate the effectiveness of balance-attack on the following nine state-of-the-art SGNN models, which serve as the victims in this experiment:

\begin{itemize}
\item \textbf{SiNE} \cite{wang2017signed} is a signed graph embedding method that uses deep neural networks and an extended structural balance theory-based loss function.
\item \textbf{SGCN} \cite{derr2018signed} introduces a novel information aggregator based on balance theory, expanding the application of GCN to signed graphs.
\item \textbf{SNEA} \cite{li2020learning} generalizes the graph attention network (GAT) \cite{velivckovic2018graph} to signed graphs and is also based on balance theory.
\item \textbf{BESIDE} \cite{chen2018bridge} combines balance and status theory. It utilizes status theory to learn ``bridge" edge information and combines it with triangle information.
\item \textbf{SGCL} \cite{shu2021sgcl} is the first work to generalize GCL to signed graphs.
\item \textbf{SDGNN} \cite{huang2021sdgnn} combines balance theory and status theory, and introduces four weight matrices to aggregate neighbor features based on edge types.
\item \textbf{RSGNN} \cite{zhang2023rsgnn} improves SGNN performance by using structure-based regularizers to highlight the intrinsic properties of signed graphs and reduce vulnerability to input graph noise.
\item \textbf{SDGCN} \cite{ko2023spectral} defines a spectral graph convolution encoder with a magnetic Laplacian.
\item \textbf{UGCL} \cite{ko2023universal} presents a GCL framework that incorporates Laplacian perturbation.
\end{itemize}

\subsubsection{Attacks for Robustness Evaluation}
We evaluate the robustness of our proposed BA-SGCL and compare it against the performance of the nine baseline SGNNs under attack. The evaluation is conducted using perturbations from two attack methods: our balance-attack \cite{zhou2024black} and FlipAttack \cite{zhu2024towards}. FlipAttack employs bi-level optimization with conflicting metrics as penalties to generate stealthy perturbations while compromising model performance. Both attacks modify edge signs while maintaining graph structure. Due to the computational cost of these attack methods on large graphs, we evaluate on $2000$-node subgraphs sampled from the Slashdot and Epinions datasets, preserving the original train-test splits.

\subsection{Effectiveness in Reducing Graph Balance Degree (Q1)}
We first evaluate the effectiveness of our method in reducing graph balance degree compared to random attacks. Fig. \ref{balance_degree} presents the comparative results between balance-attack and random attacks. The initial balance degree across all datasets ranges from $0.85$ to $0.9$. Under random attacks with $20\%$ perturbation rate, the balance degree only decreases to around $0.65$. In contrast, our balance-attack achieves substantially lower balance degrees: with merely $5\%$ perturbation rate, it reduces the balance degree to $0.35$-$0.55$, and with $20\%$ perturbation rate, it further decreases the balance degree to approximately $0.1$. These results demonstrate that our proposed method significantly outperforms random attacks in reducing graph balance degree.

\begin{figure}
\centerline{\includegraphics[width=0.45\textwidth]{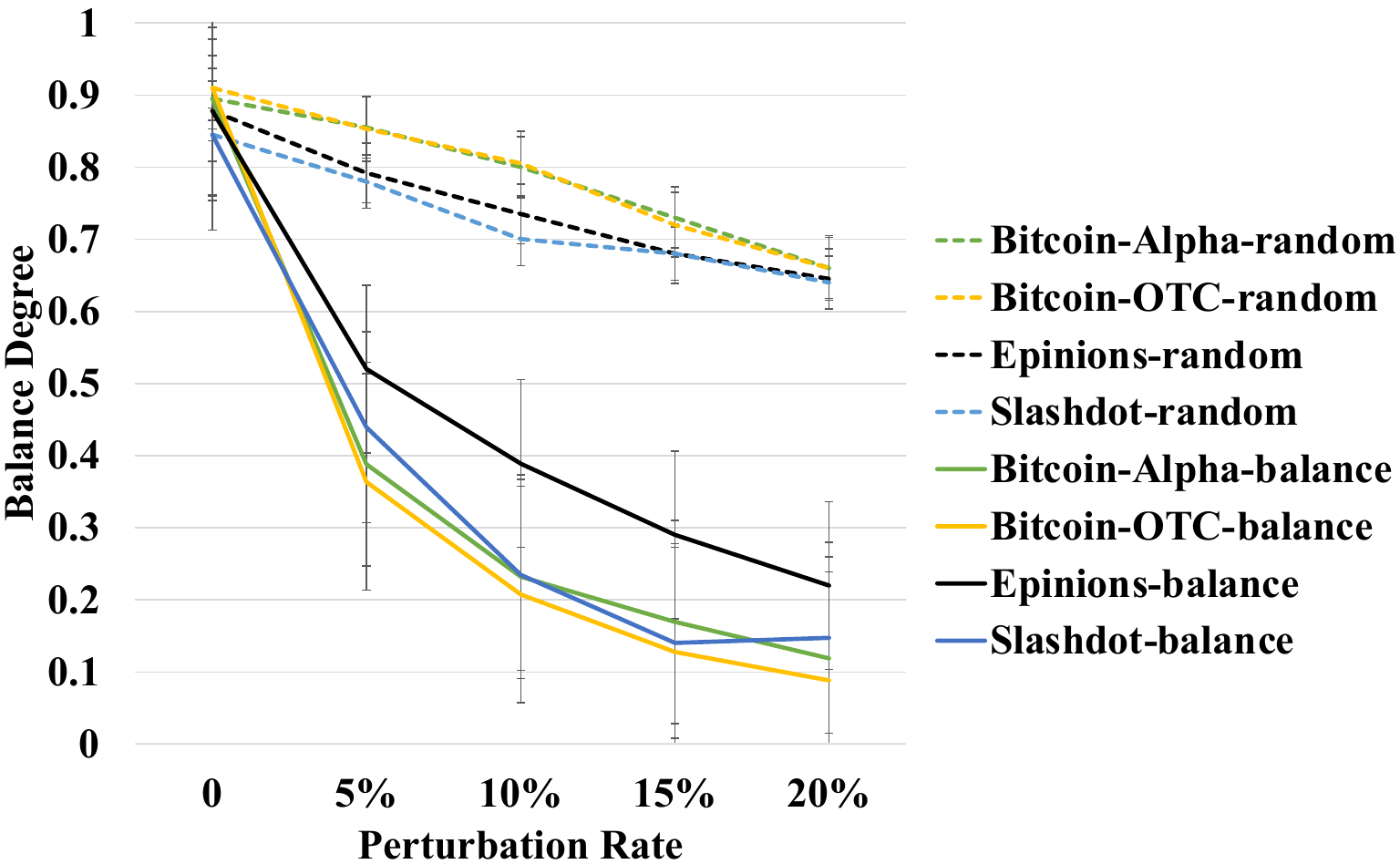}}
\caption{Balance degree of $4$ datasets under random attacks and balance-attack.}
\label{balance_degree}
\end{figure}

\subsection{Attack Performance Against State-of-the-Art SGNNs (Q2)}

To answer Q2, we conduct a comprehensive comparison between our proposed balance-attack and a standard random attack baseline. The evaluation is performed on a representative subset of five state-of-the-art SGNN models, with detailed results under a 20\% perturbation rate presented in Table~\ref{other_models}. 

The results clearly demonstrate the superior effectiveness of balance-attack. Across all five models, our method induces a significantly more substantial performance degradation than the random baseline. This is particularly evident in the case of RSGNN, a model specifically designed for robustness against random noise. While the random attack has a relatively modest impact on RSGNN's performance, its accuracy drops sharply when subjected to our balance-attack. This confirms that targeting the graph's balance degree is a critical vulnerability, capable of defeating even models that are robust to other forms of random perturbations. The broad effectiveness of balance-attack across all nine SGNN models is further detailed in the main result tables (Tables~\ref{balance-attack_performance_auc&macro} and \ref{balance-attack_performance_micro&binary}).

\begin{table}[ht] \centering
\caption{Link sign prediction performance of four representative SGNNs under random attack and Balance-attack with perturbation rate = $20\%$.}
\resizebox{\linewidth}{!}{
\begin{tabular}{cccccc}
\toprule
\textbf{Model} & \textbf{Dataset} & \textbf{Attack} & \textbf{Macro-F1} & \textbf{Micro-F1} & \textbf{Binary-F1} \\
\midrule
\multirow{8}{*}{SGCL} & \multirow{2}{*}{Bitcoin-Alpha} & random            & 0.6007          & 0.9305          & 0.9636          \\
                      &                   & \textbf{balance}  & \textbf{0.5317} & \textbf{0.8054} & \textbf{0.8734} \\
                      \cmidrule(lr){2-6}
                      & \multirow{2}{*}{Bitcoin-OTC}   & random            & 0.6131          & 0.9026          & 0.9480          \\
                      &                   & \textbf{balance}  & \textbf{0.5712} & \textbf{0.7734} & \textbf{0.8662} \\
                      \cmidrule(lr){2-6}
                      & \multirow{2}{*}{Slashdot}      & random            & 0.5578          & 0.8338          & 0.9072          \\
                      &                   & \textbf{balance}  & \textbf{0.5005} & \textbf{0.7005} & \textbf{0.8166} \\
                      \cmidrule(lr){2-6}
                      & \multirow{2}{*}{Epinions}      & random            & \textbf{0.5673} & 0.8482          & 0.9160          \\
                      &                   & \textbf{balance}  & 0.5877          & \textbf{0.7382} & \textbf{0.8374} \\
\cmidrule{1-6}
\multirow{8}{*}{SDGNN} & \multirow{2}{*}{Bitcoin-Alpha} & random            & 0.6062          & 0.8616          & 0.9234          \\
                       &                   & \textbf{balance}  & \textbf{0.5124} & \textbf{0.7372} & \textbf{0.8395} \\
                       \cmidrule(lr){2-6}
                       & \multirow{2}{*}{Bitcoin-OTC}   & random            & 0.6593          & 0.8333          & 0.9028          \\
                       &                   & \textbf{balance}  & \textbf{0.5397} & \textbf{0.7085} & \textbf{0.8174} \\
                       \cmidrule(lr){2-6}
                       & \multirow{2}{*}{Slashdot}      & random            & 0.6966          & 0.8405          & 0.8981          \\
                       &                   & \textbf{balance}  & \textbf{0.5702} & \textbf{0.7323} & \textbf{0.8283} \\
                       \cmidrule(lr){2-6}
                       & \multirow{2}{*}{Epinions}      & random            & 0.6714          & 0.8336          & 0.9023          \\
                       &                   & \textbf{balance}  & \textbf{0.6362} & \textbf{0.7693} & \textbf{0.8554} \\
\cmidrule{1-6}
\multirow{8}{*}{RSGNN} & \multirow{2}{*}{Bitcoin-Alpha} & random            & 0.5165          & 0.6839          & 0.8010          \\
                       &                   & \textbf{balance}  & \textbf{0.4832} & \textbf{0.6305} & \textbf{0.7634} \\
                       \cmidrule(lr){2-6}
                       & \multirow{2}{*}{Bitcoin-OTC}   & random            & 0.6341          & 0.7828          & 0.8673          \\
                       &                   & \textbf{balance}  & \textbf{0.5198} & \textbf{0.6427} & \textbf{0.7622} \\
                       \cmidrule(lr){2-6}
                       & \multirow{2}{*}{Slashdot}      & random            & 0.6044          & 0.6637          & 0.7576          \\
                       &                   & \textbf{balance}  & \textbf{0.5212} & \textbf{0.6005} & \textbf{0.7162} \\
                       \cmidrule(lr){2-6}
                       & \multirow{2}{*}{Epinions}      & random            & 0.6492          & 0.7409          & 0.8285          \\
                       &                   & \textbf{balance}  & \textbf{0.5967} & \textbf{0.6835} & \textbf{0.7833} \\
\cmidrule{1-6}
\multirow{8}{*}{UGCL}  & \multirow{2}{*}{Bitcoin-Alpha} & random            & 0.6192          & 0.9199          & 0.9576          \\
                       &                   & \textbf{balance}  & \textbf{0.5422} & \textbf{0.7995} & \textbf{0.8754} \\
                       \cmidrule(lr){2-6}
                       & \multirow{2}{*}{Bitcoin-OTC}   & random            & 0.6983          & 0.8988          & 0.9442          \\
                       &                   & \textbf{balance}  & \textbf{0.5948} & \textbf{0.7755} & \textbf{0.8645} \\
                       \cmidrule(lr){2-6}
                       & \multirow{2}{*}{Slashdot}      & random            & 0.6318          & 0.8538          & 0.9173          \\
                       &                   & \textbf{balance}  & \textbf{0.5875} & \textbf{0.7823} & \textbf{0.8707} \\
                       \cmidrule(lr){2-6}
                       & \multirow{2}{*}{Epinions}      & random            & \textbf{0.6390} & 0.8635          & 0.9237          \\
                       &                   & \textbf{balance}  & 0.6462          & \textbf{0.8325} & \textbf{0.9015} \\
\bottomrule
\end{tabular}}
\label{other_models}
\end{table}

\subsection{Generalizability Across Different SGNN Architectures (Q3)}
To evaluate the generalizability of our attack, we examine its effectiveness on models that do not explicitly rely on balance theory, such as UGCL. As shown in Table~\ref{other_models}, balance-attack proves remarkably effective even against these models. This demonstrates that disrupting a graph's balance is a more fundamental vulnerability than previously understood, affecting a wide range of SGNN architectures, not just those explicitly based on balance theory.

\subsection{Defense Performance against Attacks (Q4)}

\begin{table*}[!t] \centering
\caption{AUC and Macro-F1 of SGNNs on link sign prediction under balance-attack.}
\resizebox{\linewidth}{!}{
\begin{tabular}{c c cccccccccccccccccccc}
\toprule
\multirow{2}{*}{\textbf{Dataset}} & \multirow{2}{*}{\textbf{Ptb(\%)}} & \multicolumn{2}{c}{\textbf{SiNE}} & \multicolumn{2}{c}{\textbf{SGCN}} & \multicolumn{2}{c}{\textbf{SNEA}} & \multicolumn{2}{c}{\textbf{BESIDE}} & \multicolumn{2}{c}{\textbf{SDGNN}} & \multicolumn{2}{c}{\textbf{SDGCN}} & \multicolumn{2}{c}{\textbf{RSGNN}} & \multicolumn{2}{c}{\textbf{SGCL}} & \multicolumn{2}{c}{\textbf{UGCL}} & \multicolumn{2}{c}{\textbf{BA-SGCL}} \\
\cmidrule(lr){3-4} \cmidrule(lr){5-6} \cmidrule(lr){7-8} \cmidrule(lr){9-10} \cmidrule(lr){11-12} \cmidrule(lr){13-14} \cmidrule(lr){15-16} \cmidrule(lr){17-18} \cmidrule(lr){19-20} \cmidrule(lr){21-22}
& & \multicolumn{1}{c}{AUC} & \multicolumn{1}{c}{Macro-F1} & \multicolumn{1}{c}{AUC} & \multicolumn{1}{c}{Macro-F1} & \multicolumn{1}{c}{AUC} & \multicolumn{1}{c}{Macro-F1} & \multicolumn{1}{c}{AUC} & \multicolumn{1}{c}{Macro-F1} & \multicolumn{1}{c}{AUC} & \multicolumn{1}{c}{Macro-F1} & \multicolumn{1}{c}{AUC} & \multicolumn{1}{c}{Macro-F1} & \multicolumn{1}{c}{AUC} & \multicolumn{1}{c}{Macro-F1} & \multicolumn{1}{c}{AUC} & \multicolumn{1}{c}{Macro-F1} & \multicolumn{1}{c}{AUC} & \multicolumn{1}{c}{Macro-F1} & \multicolumn{1}{c}{AUC} & \multicolumn{1}{c}{Macro-F1} \\
\midrule
\multirow{5}{*}{\rotatebox[origin=c]{90}{BitcoinAlpha}} 
& 0  & 0.8103 & 0.6718 & 0.7997 & 0.6652 & 0.8019 & 0.6728 & 0.8632 & 0.7102 & 0.8558 & 0.7142 & 0.8591 & 0.7208 & 0.8039 & 0.6847 & 0.8495 & 0.7125 & {\ul 0.8648} & {\ul 0.7357} & \textbf{0.8942} & \textbf{0.7778} \\
& 5  & 0.7418 & 0.5815 & 0.7358 & 0.5643 & 0.7425 & 0.5798 & 0.7978 & 0.6338 & 0.8034 & 0.6485 & 0.8007 & 0.6422 & 0.7459 & 0.5842 & 0.8078 & 0.6542 & {\ul 0.8262} & {\ul 0.6723} & \textbf{0.8465} & \textbf{0.6904} \\
& 10 & 0.6873 & 0.5217 & 0.6917 & 0.5128 & 0.6998 & 0.5235 & 0.7564 & 0.6042 & 0.7795 & 0.5748 & 0.7787 & 0.5684 & 0.7068 & 0.5327 & 0.7619 & 0.6052 & {\ul 0.7847} & {\ul 0.6222} & \textbf{0.7992} & \textbf{0.6523} \\
& 15 & 0.6712 & 0.4915 & 0.6852 & 0.4832 & 0.6917 & 0.4982 & 0.7324 & 0.5492 & 0.7462 & 0.5474 & 0.7435 & 0.5393 & 0.6194 & 0.5018 & 0.7252 & 0.5685 & {\ul 0.7482} & {\ul 0.5772} & \textbf{0.7712} & \textbf{0.5998} \\
& 20 & 0.6517 & 0.4758 & 0.6532 & 0.4708 & 0.6697 & 0.4885 & 0.6972 & 0.5134 & 0.7182 & 0.5124 & 0.6928 & 0.5074 & 0.6022 & 0.4832 & 0.6914 & 0.5317 & {\ul 0.7248} & {\ul 0.5422} & \textbf{0.7474} & \textbf{0.5747} \\
\cmidrule{1-22}
\multirow{5}{*}{\rotatebox[origin=c]{90}{BitcoinOTC}}
& 0  & 0.8215 & 0.7617 & 0.8257 & 0.7505 & 0.8308 & 0.7618 & 0.8854 & 0.7602 & {\ul 0.8967} & 0.7515 & 0.8842 & 0.7634 & 0.8175 & 0.7557 & 0.8935 & 0.7715 & 0.8947 & {\ul 0.7805} & \textbf{0.9104} & \textbf{0.8074} \\
& 5  & 0.7617 & 0.6617 & 0.7758 & 0.6475 & 0.7818 & 0.6518 & 0.8377 & 0.7232 & 0.8562 & 0.6924 & 0.8417 & 0.6842 & 0.7958 & 0.6578 & 0.8454 & 0.7214 & {\ul 0.8607} & {\ul 0.7554} & \textbf{0.8778} & \textbf{0.7818} \\
& 10 & 0.7498 & 0.6237 & 0.7508 & 0.6147 & 0.7515 & 0.6282 & 0.8034 & 0.6452 & 0.8262 & 0.6322 & 0.8108 & 0.6382 & 0.7457 & 0.5844 & 0.8095 & 0.6615 & {\ul 0.8324} & {\ul 0.6867} & \textbf{0.8472} & \textbf{0.6988} \\
& 15 & 0.7155 & 0.5794 & 0.7275 & 0.5818 & 0.7355 & 0.5938 & 0.7717 & 0.5918 & {\ul 0.7964} & 0.5828 & 0.7854 & 0.5742 & 0.6927 & 0.5482 & 0.7822 & 0.6095 & 0.7955 & {\ul 0.6407} & \textbf{0.8138} & \textbf{0.6452} \\
& 20 & 0.6817 & 0.5584 & 0.6982 & 0.5625 & 0.7042 & 0.5694 & 0.7415 & 0.5562 & 0.7565 & 0.5397 & 0.7302 & 0.5234 & 0.6607 & 0.5198 & 0.7402 & 0.5712 & {\ul 0.7658} & {\ul 0.5948} & \textbf{0.7922} & \textbf{0.6115} \\
\cmidrule{1-22}
\multirow{5}{*}{\rotatebox[origin=c]{90}{Slashdot}}
& 0  & 0.8218 & 0.6815 & 0.8156 & 0.6838 & 0.8293 & 0.6942 & 0.8384 & 0.7097 & 0.8904 & 0.7206 & {\ul 0.8937} & 0.7294 & 0.7824 & 0.6984 & 0.8844 & 0.6878 & 0.8882 & {\ul 0.7372} & \textbf{0.8953} & \textbf{0.7543} \\
& 5  & 0.7327 & 0.6324 & 0.7437 & 0.6337 & 0.7528 & 0.6477 & 0.7837 & 0.6877 & 0.8282 & 0.6862 & 0.8012 & 0.6778 & 0.7188 & 0.6488 & 0.8152 & 0.6502 & {\ul 0.8474} & {\ul 0.6944} & \textbf{0.8565} & \textbf{0.7482} \\
& 10 & 0.6917 & 0.5815 & 0.6897 & 0.5714 & 0.6994 & 0.5828 & 0.7627 & {\ul 0.6747} & 0.7695 & 0.6338 & 0.7402 & 0.6252 & 0.6568 & 0.5817 & 0.7462 & 0.5615 & {\ul 0.7775} & 0.6654 & \textbf{0.8012} & \textbf{0.7324} \\
& 15 & 0.6417 & 0.5417 & 0.6492 & 0.5408 & 0.6595 & 0.5567 & {\ul 0.7397} & {\ul 0.6434} & 0.7395 & 0.5984 & 0.7094 & 0.5822 & 0.6372 & 0.5552 & 0.6914 & 0.5004 & 0.7362 & 0.6392 & \textbf{0.7705} & \textbf{0.6847} \\
& 20 & 0.6215 & 0.5168 & 0.6344 & 0.5202 & 0.6412 & 0.5254 & {\ul 0.7157} & {\ul 0.6052} & 0.6977 & 0.5702 & 0.6702 & 0.5664 & 0.5972 & 0.5212 & 0.6584 & 0.5005 & 0.6915 & 0.5875 & \textbf{0.7647} & \textbf{0.6585} \\
\cmidrule{1-22}
\multirow{5}{*}{\rotatebox[origin=c]{90}{Epinions}}
& 0  & 0.7915 & 0.6842 & 0.7767 & 0.6952 & 0.7917 & 0.6994 & 0.8572 & 0.7108 & 0.8595 & 0.7145 & 0.8617 & 0.6788 & 0.7825 & {\ul 0.7165} & 0.8517 & 0.7152 & {\ul 0.8727} & 0.6865 & \textbf{0.8735} & \textbf{0.7305} \\
& 5  & 0.7812 & 0.6505 & 0.7715 & 0.6607 & 0.7837 & 0.6728 & 0.8078 & 0.6954 & 0.8265 & {\ul 0.7037} & 0.8057 & 0.6587 & 0.7532 & 0.6734 & 0.8038 & 0.6665 & {\ul 0.8357} & 0.6842 & \textbf{0.8527} & \textbf{0.7207} \\
& 10 & 0.7425 & 0.6118 & 0.7387 & 0.6122 & 0.7425 & 0.6235 & 0.7474 & 0.6602 & 0.7985 & {\ul 0.6894} & 0.7827 & 0.6362 & 0.7414 & 0.6437 & 0.7885 & 0.6532 & {\ul 0.8122} & 0.6784 & \textbf{0.8448} & \textbf{0.7325} \\
& 15 & 0.7277 & 0.5884 & 0.7147 & 0.5847 & 0.7152 & 0.5892 & 0.7207 & 0.6334 & 0.7817 & 0.6594 & 0.7657 & 0.6102 & 0.7252 & 0.6205 & 0.7447 & 0.6162 & {\ul 0.7875} & {\ul 0.6637} & \textbf{0.8032} & \textbf{0.7037} \\
& 20 & 0.6917 & 0.5717 & 0.6885 & 0.5602 & 0.6895 & 0.5674 & 0.6992 & 0.6072 & 0.7587 & 0.6362 & 0.7428 & 0.6072 & 0.6985 & 0.5967 & 0.7132 & 0.5877 & {\ul 0.7715} & {\ul 0.6462} & \textbf{0.7882} & \textbf{0.6777} \\
\bottomrule
\end{tabular}}
\label{balance-attack_performance_auc&macro}
\end{table*}

\begin{table*}[!t] \centering
\caption{AUC and Macro-F1 of SGNNs on link sign prediction under FlipAttack.}
\resizebox{\linewidth}{!}{
\begin{tabular}{c c cccccccccccccccccccc}
\toprule
\multirow{2}{*}{\textbf{Dataset}} & \multirow{2}{*}{\textbf{Ptb(\%)}} & \multicolumn{2}{c}{\textbf{SiNE}} & \multicolumn{2}{c}{\textbf{SGCN}} & \multicolumn{2}{c}{\textbf{SNEA}} & \multicolumn{2}{c}{\textbf{BESIDE}} & \multicolumn{2}{c}{\textbf{SDGNN}} & \multicolumn{2}{c}{\textbf{SDGCN}} & \multicolumn{2}{c}{\textbf{RSGNN}} & \multicolumn{2}{c}{\textbf{SGCL}} & \multicolumn{2}{c}{\textbf{UGCL}} & \multicolumn{2}{c}{\textbf{BA-SGCL}} \\
\cmidrule(lr){3-4} \cmidrule(lr){5-6} \cmidrule(lr){7-8} \cmidrule(lr){9-10} \cmidrule(lr){11-12} \cmidrule(lr){13-14} \cmidrule(lr){15-16} \cmidrule(lr){17-18} \cmidrule(lr){19-20} \cmidrule(lr){21-22}
& & \multicolumn{1}{c}{AUC} & \multicolumn{1}{c}{Macro-F1} & \multicolumn{1}{c}{AUC} & \multicolumn{1}{c}{Macro-F1} & \multicolumn{1}{c}{AUC} & \multicolumn{1}{c}{Macro-F1} & \multicolumn{1}{c}{AUC} & \multicolumn{1}{c}{Macro-F1} & \multicolumn{1}{c}{AUC} & \multicolumn{1}{c}{Macro-F1} & \multicolumn{1}{c}{AUC} & \multicolumn{1}{c}{Macro-F1} & \multicolumn{1}{c}{AUC} & \multicolumn{1}{c}{Macro-F1} & \multicolumn{1}{c}{AUC} & \multicolumn{1}{c}{Macro-F1} & \multicolumn{1}{c}{AUC} & \multicolumn{1}{c}{Macro-F1} & \multicolumn{1}{c}{AUC} & \multicolumn{1}{c}{Macro-F1} \\
\midrule
\multirow{5}{*}{\rotatebox[origin=c]{90}{BitcoinAlpha}} 
& 0  & 0.8103 & 0.6718 & 0.7997 & 0.6652 & 0.8019 & 0.6728 & 0.8632 & 0.7102 & 0.8558 & 0.7142 & 0.8591 & 0.7208 & 0.8039 & 0.6847 & 0.8496 & 0.7125 & {\ul 0.8648} & {\ul 0.7357} & \textbf{0.8942} & \textbf{0.7778} \\
& 5  & 0.7133 & 0.5614 & 0.7118 & 0.5787 & 0.7226 & 0.5897 & 0.7883 & 0.6542 & 0.7482 & 0.6315 & 0.7425 & 0.6319 & 0.7304 & 0.5843 & 0.7745 & 0.6324 & {\ul 0.7997} & {\ul 0.6592} & \textbf{0.8208} & \textbf{0.6705} \\
& 10 & 0.6858 & 0.5768 & 0.6743 & 0.5547 & 0.6882 & 0.5626 & 0.7447 & {\ul 0.5982} & 0.7256 & 0.5772 & 0.7398 & 0.5785 & 0.6902 & 0.5438 & 0.7206 & 0.5573 & {\ul 0.7563} & 0.5978 & \textbf{0.7824} & \textbf{0.6248} \\
& 15 & 0.5917 & 0.4928 & 0.5988 & 0.4983 & 0.6018 & 0.5034 & 0.6657 & {\ul 0.5728} & 0.6619 & 0.5637 & 0.6634 & 0.5687 & 0.6267 & 0.5054 & 0.6594 & 0.5134 & {\ul 0.7087} & 0.5677 & \textbf{0.7293} & \textbf{0.5907} \\
& 20 & 0.5718 & 0.4416 & 0.5607 & 0.4327 & 0.5776 & 0.4428 & {\ul 0.6408} & {\ul 0.5193} & 0.6322 & 0.5084 & 0.6254 & 0.5087 & 0.5786 & 0.4578 & 0.5884 & 0.4807 & 0.6297 & 0.5043 & \textbf{0.6742} & \textbf{0.5392} \\
\cmidrule{1-22}
\multirow{5}{*}{\rotatebox[origin=c]{90}{BitcoinOTC}}
& 0  & 0.8215 & 0.7617 & 0.8257 & 0.7505 & 0.8308 & 0.7618 & 0.8854 & 0.7602 & {\ul 0.8967} & 0.7515 & 0.8842 & 0.7634 & 0.8175 & 0.7557 & 0.8935 & 0.7715 & 0.8947 & {\ul 0.7805} & \textbf{0.9104} & \textbf{0.8074} \\
& 5  & 0.7305 & 0.6438 & 0.7394 & 0.6534 & 0.7414 & 0.6638 & {\ul 0.7875} & 0.6872 & 0.7743 & 0.6752 & 0.7828 & 0.6797 & 0.7443 & 0.6795 & 0.7725 & {\ul 0.6882} & 0.7833 & 0.6844 & \textbf{0.8002} & \textbf{0.7053} \\
& 10 & 0.6902 & 0.5993 & 0.6848 & 0.6083 & 0.6928 & 0.6138 & 0.7063 & 0.6113 & 0.7083 & 0.5984 & 0.7126 & 0.5986 & 0.7016 & 0.6274 & 0.6813 & 0.6018 & {\ul 0.7236} & {\ul 0.6294} & \textbf{0.7447} & \textbf{0.6633} \\
& 15 & 0.6594 & 0.5593 & 0.6514 & 0.5634 & 0.6638 & 0.5694 & 0.6957 & 0.5914 & 0.6884 & 0.5845 & 0.6945 & 0.5917 & 0.6745 & 0.5853 & 0.6643 & 0.5892 & {\ul 0.7113} & {\ul 0.5983} & \textbf{0.7416} & \textbf{0.6337} \\
& 20 & 0.6428 & 0.5583 & 0.6312 & 0.5517 & 0.6417 & 0.5638 & 0.6504 & {\ul 0.5827} & 0.6718 & 0.5816 & 0.6734 & 0.5825 & 0.6673 & 0.5737 & 0.6434 & 0.5624 & {\ul 0.7033} & 0.5767 & \textbf{0.7223} & \textbf{0.6273} \\
\cmidrule{1-22}
\multirow{5}{*}{\rotatebox[origin=c]{90}{Slashdot}}
& 0  & 0.8218 & 0.6815 & 0.8156 & 0.6838 & 0.8293 & 0.6942 & 0.8384 & 0.7097 & 0.8904 & 0.7206 & {\ul 0.8937} & 0.7294 & 0.7824 & 0.6984 & 0.8844 & 0.6878 & 0.8882 & {\ul 0.7372} & \textbf{0.8953} & \textbf{0.7543} \\
& 5  & 0.7225 & 0.6028 & 0.7196 & 0.6097 & 0.7282 & 0.6128 & 0.7817 & 0.6353 & 0.7692 & 0.6238 & 0.7588 & 0.6176 & 0.7256 & 0.6114 & 0.7732 & 0.6305 & {\ul 0.7868} & {\ul 0.6425} & \textbf{0.8018} & \textbf{0.6627} \\
& 10 & 0.6394 & 0.5274 & 0.6415 & 0.5217 & 0.6596 & 0.5335 & 0.7392 & 0.5684 & 0.7197 & 0.5584 & 0.7154 & 0.5494 & 0.6604 & 0.5326 & 0.7218 & 0.5662 & {\ul 0.7452} & {\ul 0.5815} & \textbf{0.7623} & \textbf{0.6098} \\
& 15 & 0.6192 & 0.5023 & 0.6124 & 0.5054 & 0.6242 & 0.5282 & 0.6918 & 0.5585 & 0.6814 & 0.5456 & 0.6842 & 0.5415 & 0.6278 & 0.5115 & 0.6742 & 0.5416 & {\ul 0.6974} & {\ul 0.5637} & \textbf{0.7238} & \textbf{0.5753} \\
& 20 & 0.6128 & 0.4685 & 0.6045 & 0.4773 & 0.6187 & 0.4876 & 0.6722 & 0.5196 & 0.6514 & 0.4984 & 0.6533 & 0.5024 & 0.6232 & 0.4894 & 0.6593 & 0.5022 & {\ul 0.6807} & {\ul 0.5238} & \textbf{0.7032} & \textbf{0.5417} \\
\cmidrule{1-22}
\multirow{5}{*}{\rotatebox[origin=c]{90}{Epinions}}
& 0  & 0.7915 & 0.6842 & 0.7767 & 0.6952 & 0.7917 & 0.6994 & 0.8572 & 0.7108 & 0.8595 & 0.7145 & 0.8617 & 0.6788 & 0.7825 & {\ul 0.7165} & 0.8517 & 0.7152 & {\ul 0.8727} & 0.6865 & \textbf{0.8735} & \textbf{0.7305} \\
& 5  & 0.7116 & 0.5784 & 0.7128 & 0.5828 & 0.7293 & 0.5986 & 0.7768 & 0.6383 & 0.7433 & 0.6354 & 0.7366 & 0.6385 & 0.7383 & 0.6125 & 0.7762 & {\ul 0.6456} & {\ul 0.7884} & 0.6415 & \textbf{0.7963} & \textbf{0.6558} \\
& 10 & 0.6818 & 0.5164 & 0.6894 & 0.5212 & 0.6938 & 0.5356 & 0.7218 & 0.5882 & {\ul 0.7352} & 0.5716 & 0.7292 & 0.5875 & 0.7143 & 0.5528 & 0.7213 & 0.5815 & 0.7315 & {\ul 0.5894} & \textbf{0.7568} & \textbf{0.6014} \\
& 15 & 0.6595 & 0.4914 & 0.6694 & 0.4984 & 0.6778 & 0.5037 & 0.6938 & 0.5412 & {\ul 0.7048} & 0.5334 & 0.6982 & 0.5384 & 0.6825 & 0.5126 & 0.6973 & 0.5413 & 0.7015 & {\ul 0.5546} & \textbf{0.7214} & \textbf{0.5714} \\
& 20 & 0.6083 & 0.4993 & 0.6144 & 0.4888 & 0.6252 & 0.4986 & 0.6615 & 0.5264 & 0.6704 & 0.5274 & 0.6688 & 0.5283 & 0.6432 & 0.5056 & 0.6594 & 0.5335 & {\ul 0.6788} & {\ul 0.5416} & \textbf{0.6946} & \textbf{0.5615} \\
\bottomrule
\end{tabular}}
\label{flipattack_performance_auc&macro}
\end{table*}

\begin{table*}[!h] \centering
\caption{Micro-F1 and Binary-F1 of SGNNs on link sign prediction under balance-attack.}
\resizebox{\linewidth}{!}{
\begin{tabular}{c c cccccccccccccccccccc}
\toprule
\multirow{2}{*}{\textbf{Dataset}} & \multirow{2}{*}{\textbf{Ptb(\%)}} & \multicolumn{2}{c}{\textbf{SiNE}} & \multicolumn{2}{c}{\textbf{SGCN}} & \multicolumn{2}{c}{\textbf{SNEA}} & \multicolumn{2}{c}{\textbf{BESIDE}} & \multicolumn{2}{c}{\textbf{SDGNN}} & \multicolumn{2}{c}{\textbf{SDGCN}} & \multicolumn{2}{c}{\textbf{RSGNN}} & \multicolumn{2}{c}{\textbf{SGCL}} & \multicolumn{2}{c}{\textbf{UGCL}} & \multicolumn{2}{c}{\textbf{BA-SGCL}} \\
\cmidrule(lr){3-4} \cmidrule(lr){5-6} \cmidrule(lr){7-8} \cmidrule(lr){9-10} \cmidrule(lr){11-12} \cmidrule(lr){13-14} \cmidrule(lr){15-16} \cmidrule(lr){17-18} \cmidrule(lr){19-20} \cmidrule(lr){21-22}
& & \multicolumn{1}{c}{Micro-F1} & \multicolumn{1}{c}{Binary-F1} & \multicolumn{1}{c}{Micro-F1} & \multicolumn{1}{c}{Binary-F1} & \multicolumn{1}{c}{Micro-F1} & \multicolumn{1}{c}{Binary-F1} & \multicolumn{1}{c}{Micro-F1} & \multicolumn{1}{c}{Binary-F1} & \multicolumn{1}{c}{Micro-F1} & \multicolumn{1}{c}{Binary-F1} & \multicolumn{1}{c}{Micro-F1} & \multicolumn{1}{c}{Binary-F1} & \multicolumn{1}{c}{Micro-F1} & \multicolumn{1}{c}{Binary-F1} & \multicolumn{1}{c}{Micro-F1} & \multicolumn{1}{c}{Binary-F1} & \multicolumn{1}{c}{Micro-F1} & \multicolumn{1}{c}{Binary-F1} & \multicolumn{1}{c}{Micro-F1} & \multicolumn{1}{c}{Binary-F1} \\
\midrule
\multirow{5}{*}{\rotatebox[origin=c]{90}{BitcoinAlpha}} 
& 0  & 0.8497 & 0.8956 & 0.8523 & 0.9044 & 0.8615 & 0.9185 & 0.9293 & 0.9494 & 0.9116 & 0.9445 & 0.9214 & 0.9472 & 0.8824 & 0.9343 & 0.9234 & 0.9593 & {\ul 0.9484} & {\ul 0.9734} & \textbf{0.9528} & \textbf{0.9753} \\
& 5  & 0.7135 & 0.8243 & 0.7265 & 0.8137 & 0.7322 & 0.8273 & 0.8957 & 0.9213 & 0.8756 & 0.9244 & 0.8737 & 0.9278 & 0.7565 & 0.8536 & 0.9162 & 0.9355 & {\ul 0.9212} & {\ul 0.9576} & \textbf{0.9294} & \textbf{0.9623} \\
& 10 & 0.6595 & 0.7577 & 0.6604 & 0.7682 & 0.6734 & 0.7755 & 0.8582 & 0.9146 & 0.8305 & 0.8984 & 0.8217 & 0.8892 & 0.6805 & 0.7987 & {\ul 0.8845} & 0.9172 & 0.8835 & {\ul 0.9367} & \textbf{0.9034} & \textbf{0.9458} \\
& 15 & 0.6157 & 0.7482 & 0.6202 & 0.7565 & 0.6305 & 0.7636 & 0.8085 & 0.8983 & 0.7964 & 0.8715 & 0.7885 & 0.8695 & 0.6602 & 0.7865 & \textbf{0.8562} & 0.9005 & 0.8455 & {\ul 0.9132} & {\ul 0.8432} & \textbf{0.9163} \\
& 20 & 0.5985 & 0.7293 & 0.6025 & 0.7343 & 0.6134 & 0.7484 & 0.7632 & {\ul 0.8794} & 0.7372 & 0.8395 & 0.7325 & 0.8272 & 0.6305 & 0.7634 & {\ul 0.8054} & 0.8734 & 0.7995 & 0.8754 & \textbf{0.8085} & \textbf{0.8805} \\
\cmidrule{1-22}
\multirow{5}{*}{\rotatebox[origin=c]{90}{BitcoinOTC}}
& 0  & 0.8684 & 0.9182 & 0.8794 & 0.9293 & 0.8827 & 0.9377 & 0.9125 & 0.9457 & 0.9047 & 0.9435 & 0.9124 & 0.9394 & 0.8915 & 0.9385 & 0.9205 & 0.9567 & {\ul 0.9364} & {\ul 0.9654} & \textbf{0.9385} & \textbf{0.9665} \\
& 5  & 0.7735 & 0.8523 & 0.7825 & 0.8664 & 0.7955 & 0.8772 & 0.8842 & 0.9354 & 0.8674 & 0.9227 & 0.8645 & 0.9273 & 0.7974 & 0.8764 & 0.9077 & 0.9444 & {\ul 0.9153} & {\ul 0.9534} & \textbf{0.9227} & \textbf{0.9574} \\
& 10 & 0.7315 & 0.8304 & 0.7477 & 0.8405 & 0.7535 & 0.8534 & 0.8265 & 0.9043 & 0.8145 & 0.8894 & 0.8045 & 0.8846 & 0.7137 & 0.8155 & 0.8545 & {\ul 0.9197} & \textbf{0.8665} & \textbf{0.9235} & {\ul 0.8593} & 0.9185 \\
& 15 & 0.6943 & 0.8094 & 0.7063 & 0.8107 & 0.7155 & 0.8255 & 0.7875 & 0.8712 & 0.7654 & 0.8565 & 0.7665 & 0.8576 & 0.6784 & 0.7905 & {\ul 0.8132} & 0.8943 & \textbf{0.8237} & \textbf{0.8967} & 0.8137 & {\ul 0.8893} \\
& 20 & 0.6727 & 0.7843 & 0.6836 & 0.7922 & 0.6985 & 0.8097 & 0.7405 & 0.8424 & 0.7085 & 0.8174 & 0.7172 & 0.8272 & 0.6427 & 0.7622 & 0.7734 & \textbf{0.8662} & \textbf{0.7755} & {\ul 0.8645} & {\ul 0.7746} & 0.8633 \\
\cmidrule{1-22}
\multirow{5}{*}{\rotatebox[origin=c]{90}{Slashdot}}
& 0  & 0.8015 & 0.8617 & 0.8124 & 0.8785 & 0.8232 & 0.8856 & 0.8545 & 0.9142 & 0.8695 & 0.9292 & 0.8645 & 0.9272 & 0.7826 & 0.8577 & 0.8735 & 0.9294 & {\ul 0.8794} & {\ul 0.9294} & \textbf{0.8795} & \textbf{0.9304} \\
& 5  & 0.7012 & 0.7985 & 0.7195 & 0.8094 & 0.7203 & 0.8134 & 0.8367 & 0.8785 & 0.8575 & 0.9164 & 0.8423 & 0.9083 & 0.7447 & 0.8222 & 0.8284 & 0.8992 & {\ul 0.8735} & \textbf{0.9275} & \textbf{0.8744} & {\ul 0.9265} \\
& 10 & 0.6415 & 0.7464 & 0.6515 & 0.7525 & 0.6645 & 0.7645 & 0.8097 & 0.8515 & 0.8053 & 0.8815 & 0.7987 & 0.8772 & 0.6715 & 0.7764 & 0.7575 & 0.8556 & {\ul 0.8536} & \textbf{0.9153} & \textbf{0.8705} & {\ul 0.9106} \\
& 15 & 0.6015 & 0.6917 & 0.6022 & 0.7096 & 0.6193 & 0.7186 & 0.7835 & 0.8212 & 0.7707 & 0.8565 & 0.7634 & 0.8425 & 0.6375 & 0.7463 & 0.7224 & 0.8334 & {\ul 0.8304} & {\ul 0.9014} & \textbf{0.8394} & \textbf{0.9057} \\
& 20 & 0.5855 & 0.6914 & 0.5935 & 0.7025 & 0.6065 & 0.7135 & 0.7423 & 0.8017 & 0.7323 & 0.8283 & 0.7283 & 0.8115 & 0.6005 & 0.7162 & 0.7005 & 0.8166 & \textbf{0.7823} & \textbf{0.8707} & {\ul 0.7814} & {\ul 0.8634} \\
\cmidrule{1-22}
\multirow{5}{*}{\rotatebox[origin=c]{90}{Epinions}}
& 0  & 0.8092 & 0.8727 & 0.8184 & 0.8866 & 0.8282 & 0.8967 & 0.8545 & 0.9156 & 0.8724 & 0.9262 & 0.8634 & 0.9255 & 0.8284 & 0.8935 & 0.8676 & 0.9235 & \textbf{0.8765} & \textbf{0.9307} & {\ul 0.8745} & {\ul 0.9275} \\
& 5  & 0.7627 & 0.8445 & 0.7737 & 0.8525 & 0.7845 & 0.8634 & 0.8363 & 0.9002 & 0.8615 & 0.9193 & 0.8536 & 0.9077 & 0.7733 & 0.8545 & 0.8475 & 0.9122 & \textbf{0.8723} & \textbf{0.9284} & {\ul 0.8715} & {\ul 0.9275} \\
& 10 & 0.6917 & 0.7874 & 0.7035 & 0.7973 & 0.7115 & 0.8097 & 0.7935 & 0.8817 & 0.8377 & 0.9035 & 0.8225 & 0.9057 & 0.7345 & 0.8237 & 0.8215 & 0.8954 & \textbf{0.8675} & \textbf{0.9245} & {\ul 0.8605} & {\ul 0.9184} \\
& 15 & 0.6613 & 0.7685 & 0.6714 & 0.7704 & 0.6835 & 0.7846 & 0.7692 & 0.8415 & 0.8094 & 0.8834 & 0.7977 & 0.8857 & 0.7065 & 0.8013 & 0.7755 & 0.8633 & \textbf{0.8535} & \textbf{0.9155} & {\ul 0.8454} & {\ul 0.9087} \\
& 20 & 0.6377 & 0.7446 & 0.6456 & 0.7494 & 0.6573 & 0.7583 & 0.7335 & 0.8144 & 0.7693 & 0.8554 & 0.7524 & 0.8493 & 0.6835 & 0.7833 & 0.7382 & 0.8374 & \textbf{0.8325} & \textbf{0.9015} & {\ul 0.8215} & {\ul 0.8935} \\
\bottomrule
\end{tabular}}
\label{balance-attack_performance_micro&binary}
\end{table*}

\begin{table*}[!h] \centering
\caption{Micro-F1 and Binary-F1 of SGNNs on link sign prediction under FlipAttack.}
\resizebox{\linewidth}{!}{
\begin{tabular}{c c cccccccccccccccccccc}
\toprule
\multirow{2}{*}{\textbf{Dataset}} & \multirow{2}{*}{\textbf{Ptb(\%)}} & \multicolumn{2}{c}{\textbf{SiNE}} & \multicolumn{2}{c}{\textbf{SGCN}} & \multicolumn{2}{c}{\textbf{SNEA}} & \multicolumn{2}{c}{\textbf{BESIDE}} & \multicolumn{2}{c}{\textbf{SDGNN}} & \multicolumn{2}{c}{\textbf{SDGCN}} & \multicolumn{2}{c}{\textbf{RSGNN}} & \multicolumn{2}{c}{\textbf{SGCL}} & \multicolumn{2}{c}{\textbf{UGCL}} & \multicolumn{2}{c}{\textbf{BA-SGCL}} \\
\cmidrule(lr){3-4} \cmidrule(lr){5-6} \cmidrule(lr){7-8} \cmidrule(lr){9-10} \cmidrule(lr){11-12} \cmidrule(lr){13-14} \cmidrule(lr){15-16} \cmidrule(lr){17-18} \cmidrule(lr){19-20} \cmidrule(lr){21-22}
& & \multicolumn{1}{c}{Micro-F1} & \multicolumn{1}{c}{Binary-F1} & \multicolumn{1}{c}{Micro-F1} & \multicolumn{1}{c}{Binary-F1} & \multicolumn{1}{c}{Micro-F1} & \multicolumn{1}{c}{Binary-F1} & \multicolumn{1}{c}{Micro-F1} & \multicolumn{1}{c}{Binary-F1} & \multicolumn{1}{c}{Micro-F1} & \multicolumn{1}{c}{Binary-F1} & \multicolumn{1}{c}{Micro-F1} & \multicolumn{1}{c}{Binary-F1} & \multicolumn{1}{c}{Micro-F1} & \multicolumn{1}{c}{Binary-F1} & \multicolumn{1}{c}{Micro-F1} & \multicolumn{1}{c}{Binary-F1} & \multicolumn{1}{c}{Micro-F1} & \multicolumn{1}{c}{Binary-F1} & \multicolumn{1}{c}{Micro-F1} & \multicolumn{1}{c}{Binary-F1} \\
\midrule
\multirow{5}{*}{\rotatebox[origin=c]{90}{BitcoinAlpha}}
& 0  & 0.8497 & 0.8956 & 0.8523 & 0.9044 & 0.8615 & 0.9185 & 0.9293 & 0.9494 & 0.9116 & 0.9445 & 0.9214 & 0.9472 & 0.8824 & 0.9343 & 0.9234 & 0.9593 & {\ul 0.9484} & {\ul 0.9734} & \textbf{0.9528} & \textbf{0.9753} \\
& 5  & 0.7291 & 0.8347 & 0.7254 & 0.8298 & 0.7318 & 0.8382 & 0.8538 & 0.9128 & 0.8335 & 0.9026 & 0.8311 & 0.9127 & 0.7506 & 0.8464 & 0.8671 & 0.9161 & \textbf{0.8813} & \textbf{0.9261} & {\ul 0.8745} & {\ul 0.9217} \\
& 10 & 0.6728 & 0.8027 & 0.6742 & 0.7987 & 0.6881 & 0.8048 & 0.8281 & 0.8797 & 0.8177 & 0.8596 & 0.8047 & 0.8754 & 0.6997 & 0.8107 & 0.8205 & 0.9042 & \textbf{0.8594} & \textbf{0.9226} & {\ul 0.8573} & {\ul 0.9201} \\
& 15 & 0.6394 & 0.7725 & 0.6311 & 0.7637 & 0.6442 & 0.7731 & 0.7831 & 0.8628 & 0.7662 & 0.8421 & 0.7527 & 0.8311 & 0.6641 & 0.7854 & 0.7702 & 0.8737 & \textbf{0.8163} & \textbf{0.8998} & {\ul 0.8157} & {\ul 0.8948} \\
& 20 & 0.5728 & 0.7128 & 0.5794 & 0.7078 & 0.5877 & 0.7268 & 0.7383 & 0.8311 & 0.7127 & 0.8128 & 0.7041 & 0.8097 & 0.5904 & 0.7263 & 0.6917 & 0.8187 & {\ul 0.7454} & {\ul 0.8343} & \textbf{0.7454} & \textbf{0.8364} \\
\cmidrule{1-22}
\multirow{5}{*}{\rotatebox[origin=c]{90}{BitcoinOTC}}
& 0  & 0.8684 & 0.9182 & 0.8794 & 0.9293 & 0.8827 & 0.9377 & 0.9125 & 0.9457 & 0.9047 & 0.9435 & 0.9124 & 0.9394 & 0.8915 & 0.9385 & 0.9205 & 0.9567 & {\ul 0.9364} & {\ul 0.9654} & \textbf{0.9385} & \textbf{0.9665} \\
& 5  & 0.7688 & 0.8587 & 0.7783 & 0.8618 & 0.7981 & 0.8773 & 0.8228 & 0.9033 & 0.8128 & 0.9191 & 0.8064 & 0.9117 & 0.8004 & 0.8763 & 0.8413 & 0.9037 & {\ul 0.8501} & {\ul 0.9117} & \textbf{0.8605} & \textbf{0.9184} \\
& 10 & 0.7248 & 0.8274 & 0.7234 & 0.8257 & 0.7478 & 0.8364 & 0.8127 & 0.8783 & 0.8017 & 0.8828 & 0.7994 & 0.8771 & 0.7507 & 0.8417 & 0.7934 & 0.8847 & {\ul 0.8488} & {\ul 0.9044} & \textbf{0.8577} & \textbf{0.9118} \\
& 15 & 0.6687 & 0.7744 & 0.6717 & 0.7768 & 0.6831 & 0.7898 & 0.7781 & 0.8644 & 0.7667 & 0.8531 & 0.7528 & 0.8434 & 0.6967 & 0.7994 & 0.7795 & 0.8728 & {\ul 0.8148} & \textbf{0.8884} & \textbf{0.8185} & {\ul 0.8868} \\
& 20 & 0.6484 & 0.7516 & 0.6421 & 0.7593 & 0.6557 & 0.7623 & 0.7647 & 0.8323 & 0.7468 & 0.8255 & 0.7358 & 0.8044 & 0.6785 & 0.7844 & 0.7344 & 0.8421 & {\ul 0.8033} & {\ul 0.8816} & \textbf{0.8077} & \textbf{0.8881} \\
\cmidrule{1-22}
\multirow{5}{*}{\rotatebox[origin=c]{90}{Slashdot}}
& 0  & 0.8015 & 0.8617 & 0.8124 & 0.8785 & 0.8232 & 0.8856 & 0.8545 & 0.9142 & 0.8695 & 0.9292 & 0.8645 & 0.9272 & 0.7826 & 0.8577 & 0.8735 & 0.9294 & {\ul 0.8794} & {\ul 0.9294} & \textbf{0.8795} & \textbf{0.9304} \\
& 5  & 0.7144 & 0.7428 & 0.7083 & 0.7596 & 0.7183 & 0.7623 & 0.7743 & 0.8613 & 0.7794 & 0.8633 & 0.7681 & 0.8544 & 0.7107 & 0.7793 & 0.8038 & 0.8553 & \textbf{0.8247} & {\ul 0.8793} & {\ul 0.8203} & \textbf{0.8846} \\
& 10 & 0.6733 & 0.7414 & 0.6863 & 0.7471 & 0.6947 & 0.7587 & 0.7714 & 0.8516 & 0.7623 & 0.8533 & 0.7531 & 0.8383 & 0.7027 & 0.7631 & 0.7946 & 0.8525 & \textbf{0.8118} & {\ul 0.8713} & {\ul 0.8028} & \textbf{0.8733} \\
& 15 & 0.6691 & 0.7283 & 0.6646 & 0.7324 & 0.6704 & 0.7424 & 0.7534 & 0.8374 & 0.7424 & 0.8394 & 0.7384 & 0.8185 & 0.6883 & 0.7536 & 0.7736 & 0.8324 & {\ul 0.7804} & {\ul 0.8461} & \textbf{0.7881} & \textbf{0.8577} \\
& 20 & 0.6268 & 0.7014 & 0.6283 & 0.7096 & 0.6374 & 0.7216 & 0.7278 & 0.8096 & 0.7214 & 0.8213 & 0.7263 & 0.8003 & 0.6536 & 0.7324 & 0.7447 & 0.8196 & {\ul 0.7578} & {\ul 0.8325} & \textbf{0.7633} & \textbf{0.8383} \\
\cmidrule{1-22}
\multirow{5}{*}{\rotatebox[origin=c]{90}{Epinions}}
& 0  & 0.8092 & 0.8727 & 0.8184 & 0.8866 & 0.8282 & 0.8967 & 0.8545 & 0.9156 & 0.8724 & 0.9262 & 0.8634 & 0.9255 & 0.8284 & 0.8935 & 0.8676 & 0.9235 & \textbf{0.8765} & \textbf{0.9307} & {\ul 0.8745} & {\ul 0.9275} \\
& 5  & 0.7147 & 0.7711 & 0.7121 & 0.7764 & 0.7248 & 0.7883 & 0.8184 & 0.8633 & 0.8147 & 0.8666 & 0.8027 & 0.8624 & 0.7383 & 0.7954 & 0.8238 & 0.8631 & {\ul 0.8314} & \textbf{0.8983} & \textbf{0.8356} & {\ul 0.8971} \\
& 10 & 0.7013 & 0.7623 & 0.7094 & 0.7631 & 0.7216 & 0.7754 & 0.7986 & 0.8525 & 0.8027 & 0.8515 & 0.7986 & 0.8473 & 0.7324 & 0.7833 & 0.8023 & 0.8524 & {\ul 0.8274} & \textbf{0.8943} & \textbf{0.8327} & {\ul 0.8901} \\
& 15 & 0.6931 & 0.7374 & 0.7027 & 0.7445 & 0.7141 & 0.7524 & 0.7984 & 0.8318 & 0.7997 & 0.8396 & 0.7924 & 0.8246 & 0.7144 & 0.7644 & 0.8011 & 0.8325 & \textbf{0.8257} & {\ul 0.8723} & {\ul 0.8217} & \textbf{0.8778} \\
& 20 & 0.6884 & 0.7214 & 0.6831 & 0.7284 & 0.6986 & 0.7371 & 0.7827 & 0.8295 & 0.7727 & 0.8276 & 0.7631 & 0.8196 & 0.6987 & 0.7453 & 0.7868 & 0.8261 & {\ul 0.7996} & {\ul 0.8371} & \textbf{0.8027} & \textbf{0.8436} \\
\bottomrule
\end{tabular}}
\label{flipattack_performance_micro&binary}
\end{table*}

We evaluate model performance under two global attack scenarios with perturbation ratios ranging from $0\%$ to $20\%$. The results, presented in Tables \ref{balance-attack_performance_auc&macro}-\ref{flipattack_performance_micro&binary}, lead to the following key observations:

First, we observe a distinct performance pattern across the different evaluation metrics. BA-SGCL achieves substantial improvements in AUC and Macro-F1 over all baselines, while its gains in Micro-F1 and Binary-F1 appear more moderate. This discrepancy is explained by the inherent class imbalance in signed graphs. To provide direct evidence of our model's superiority, we analyze the composition of misclassified edges. As shown in Table~\ref{proportion_positive_edges}, BA-SGCL consistently misclassifies a lower proportion of positive edges than the baselines. This enhanced predictive capability stems directly from our core contrastive objective: by forcing the encoder to align representations from the low-balance poisoned graph with those from a high-balance augmented view, BA-SGCL learns embeddings that are inherently robust and retain the characteristics of a well-structured graph, leading to more accurate predictions.

Second, existing SGNNs degrade significantly under both attacks, while our model maintains high performance with minimal degradation. This is particularly evident when comparing against RSGNN, which is vulnerable to these targeted attacks despite being designed for robustness. This supports our claim that its direct defense mechanism—attempting to restore graph balance—is susceptible to the ``Irreversibility of Balance-related Information'' challenge. In contrast, BA-SGCL does not attempt to directly recover the flawed graph. By learning robust features in the latent space through contrastive learning, our model effectively bypasses the Irreversibility challenge that plagues direct-recovery methods.

Third, BA-SGCL outperforms other GCL-based models (SGCL, UGCL) even on clean graphs (i.e., with a $0\%$ perturbation rate). This highlights a key advantage of our approach. By contrasting against a meaningfully structured, high-balance positive view, the encoder learns more fundamental properties of signed graphs than methods that contrast against randomly perturbed views. Our guided balance augmentation provides a stronger and more relevant learning signal, resulting in higher-quality node embeddings for the downstream task even in the absence of an attack.

\begin{table}[!h] \centering
\caption{Proportion of Positive Edges in Misclassified Samples under balance-attack (perturbation rate = $20\%$). Lower values indicate better positive edge prediction performance.}
\begin{tabular}{ccc}
\toprule
\textbf{Dataset} & \textbf{UGCL} & \textbf{BA-SGCL} \\
\midrule
BitcoinAlpha & 60.08\% & \textbf{42.25\%} \\
BitcoinOTC   & 68.64\% & \textbf{54.38\%} \\
Slashdot     & 36.24\% & \textbf{18.56\%} \\
Epinions     & 24.03\% & \textbf{15.07\%} \\
\bottomrule
\end{tabular}
\label{proportion_positive_edges}
\end{table}

\begin{table}[!h] \centering
\caption{Effectiveness of balance augmentation under balance-attack.}
\resizebox{\linewidth}{!}{
\begin{tabular}{c c cccc cccc}
\toprule
\multirow{2}{*}{\textbf{Dataset}} & \multirow{2}{*}{\textbf{Ptb(\%)}} & \multicolumn{4}{c}{\textbf{random-SGCL}} & \multicolumn{4}{c}{\textbf{BA-SGCL}} \\
\cmidrule(lr){3-6} \cmidrule(lr){7-10}
& & \multicolumn{1}{c}{AUC} & \multicolumn{1}{c}{Macro-F1} & \multicolumn{1}{c}{Micro-F1} & \multicolumn{1}{c}{Binary-F1} & \multicolumn{1}{c}{AUC} & \multicolumn{1}{c}{Macro-F1} & \multicolumn{1}{c}{Micro-F1} & \multicolumn{1}{c}{Binary-F1} \\
\midrule
\multirow{3}{*}{BitcoinAlpha} & 0  & 0.8363 & 0.7258 & 0.9221 & 0.9601 & \textbf{0.8942} & \textbf{0.7778} & \textbf{0.9528} & \textbf{0.9753} \\
                              & 10 & 0.7692 & 0.6123 & 0.8778 & 0.9103 & \textbf{0.7992} & \textbf{0.6523} & \textbf{0.9034} & \textbf{0.9458} \\
                              & 20 & 0.7034 & 0.5401 & 0.8005 & 0.8762 & \textbf{0.7474} & \textbf{0.5747} & \textbf{0.8085} & \textbf{0.8805} \\
\cmidrule{1-10}
\multirow{3}{*}{BitcoinOTC} & 0  & 0.8894 & 0.7723 & 0.9189 & 0.9587 & \textbf{0.9104} & \textbf{0.8074} & \textbf{0.9385} & \textbf{0.9665} \\
                            & 10 & 0.8023 & 0.6632 & 0.8541 & 0.9083 & \textbf{0.8472} & \textbf{0.6988} & \textbf{0.8593} & \textbf{0.9185} \\
                            & 20 & 0.7539 & 0.5885 & 0.7724 & 0.8623 & \textbf{0.7922} & \textbf{0.6115} & \textbf{0.7746} & \textbf{0.8633} \\
\cmidrule{1-10}
\multirow{3}{*}{Slashdot}   & 0  & 0.8814 & 0.6889 & 0.8746 & 0.9223 & \textbf{0.8953} & \textbf{0.7543} & \textbf{0.8795} & \textbf{0.9304} \\
                            & 10 & 0.7498 & 0.5789 & 0.7982 & 0.8779 & \textbf{0.8012} & \textbf{0.7324} & \textbf{0.8705} & \textbf{0.9106} \\
                            & 20 & 0.6798 & 0.5122 & 0.7652 & 0.8523 & \textbf{0.7647} & \textbf{0.6585} & \textbf{0.7814} & \textbf{0.8634} \\
\cmidrule{1-10}
\multirow{3}{*}{Epinions}   & 0  & 0.8582 & 0.7123 & 0.8698 & 0.9223 & \textbf{0.8735} & \textbf{0.7305} & \textbf{0.8745} & \textbf{0.9275} \\
                            & 10 & 0.7943 & 0.6579 & \textbf{0.8625} & 0.9123 & \textbf{0.8448} & \textbf{0.7325} & 0.8605          & \textbf{0.9184} \\
                            & 20 & 0.7331 & 0.6293 & \textbf{0.8241} & \textbf{0.8996} & \textbf{0.7882} & \textbf{0.6777} & 0.8215          & 0.8935          \\
\bottomrule
\end{tabular}}
\label{Q5_balance-attack}
\end{table}

\begin{table}[!h] \centering
\caption{Effectiveness of balance augmentation under FlipAttack.}
\resizebox{\linewidth}{!}{
\begin{tabular}{c c cccc cccc}
\toprule
\multirow{2}{*}{\textbf{Dataset}} & \multirow{2}{*}{\textbf{Ptb(\%)}} & \multicolumn{4}{c}{\textbf{random-SGCL}} & \multicolumn{4}{c}{\textbf{BA-SGCL}} \\
\cmidrule(lr){3-6} \cmidrule(lr){7-10}
& & \multicolumn{1}{c}{AUC} & \multicolumn{1}{c}{Macro-F1} & \multicolumn{1}{c}{Micro-F1} & \multicolumn{1}{c}{Binary-F1} & \multicolumn{1}{c}{AUC} & \multicolumn{1}{c}{Macro-F1} & \multicolumn{1}{c}{Micro-F1} & \multicolumn{1}{c}{Binary-F1} \\
\midrule
\multirow{3}{*}{BitcoinAlpha} & 0  & 0.8523 & 0.7329 & 0.9441 & 0.9712 & \textbf{0.8942} & \textbf{0.7778} & \textbf{0.9528} & \textbf{0.9753} \\
                              & 10 & 0.7423 & 0.5889 & 0.8512 & 0.9198 & \textbf{0.7824} & \textbf{0.6248} & \textbf{0.8573} & \textbf{0.9201} \\
                              & 20 & 0.6182 & 0.5021 & 0.7421 & 0.8327 & \textbf{0.6742} & \textbf{0.5392} & \textbf{0.7454} & \textbf{0.8364} \\
\cmidrule{1-10}
\multirow{3}{*}{BitcoinOTC}   & 0  & 0.8829 & 0.7789 & 0.9351 & 0.9612 & \textbf{0.9104} & \textbf{0.8074} & \textbf{0.9385} & \textbf{0.9665} \\
                              & 10 & 0.7179 & 0.6143 & 0.8427 & 0.9024 & \textbf{0.7447} & \textbf{0.6633} & \textbf{0.8577} & \textbf{0.9118} \\
                              & 20 & 0.6988 & 0.5613 & 0.7989 & 0.8769 & \textbf{0.7223} & \textbf{0.6273} & \textbf{0.8077} & \textbf{0.8881} \\
\cmidrule{1-10}
\multirow{3}{*}{Slashdot}     & 0  & 0.8853 & 0.7214 & 0.8679 & 0.9123 & \textbf{0.8953} & \textbf{0.7543} & \textbf{0.8795} & \textbf{0.9304} \\
                              & 10 & 0.7332 & 0.5721 & \textbf{0.8097} & 0.8687 & \textbf{0.7623} & \textbf{0.6098} & 0.8028          & \textbf{0.8733} \\
                              & 20 & 0.6712 & 0.5179 & 0.7527 & 0.8305 & \textbf{0.7032} & \textbf{0.5417} & \textbf{0.7633} & \textbf{0.8383} \\
\cmidrule{1-10}
\multirow{3}{*}{Epinions}     & 0  & 0.8647 & 0.6932 & 0.8726 & 0.9228 & \textbf{0.8735} & \textbf{0.7305} & \textbf{0.8745} & \textbf{0.9275} \\
                              & 10 & 0.7267 & 0.5823 & 0.8198 & \textbf{0.8943} & \textbf{0.7563} & \textbf{0.6014} & \textbf{0.8327} & 0.8901          \\
                              & 20 & 0.6665 & 0.5402 & 0.7994 & 0.8279 & \textbf{0.6946} & \textbf{0.5615} & \textbf{0.8027} & \textbf{0.8436} \\
\bottomrule
\end{tabular}}
\label{Q5_flipattack}
\end{table}

\subsection{Analysis of Balance Augmentation (Q5)}

To evaluate the effectiveness of balance augmentation, we compare BA-SGCL with random-SGCL, a control model where signs in one augmented view are randomly perturbed while the other view remains unchanged. All other components and settings remain identical between the two models. Tables \ref{Q5_balance-attack} and \ref{Q5_flipattack} present detailed comparative results under balance-attack and FlipAttack, respectively. BA-SGCL consistently outperforms random-SGCL, demonstrating the effectiveness of our proposed balance augmentation strategy.

\subsection{Ablation Study}
To validate that our model's enhanced robustness stems from the combination of GCL framework and balance augmentation rather than the SDGCN encoder alone, we conducted experiments replacing SDGCN with alternative encoders such as SGCN \cite{derr2018signed}, while maintaining all other components. Tables \ref{ablation_study_balance-attack} and \ref{ablation_study_flipattack} compare the performance of BA-SGCL using SGCN encoder against the original SGCN model under balance-attack and FlipAttack, respectively. BA-SGCL (SGCN encoder) significantly outperforms the baseline SGCN model, confirming the effectiveness of our proposed GCL framework and balance augmentation strategy.

\begin{table}[!h] \centering
\caption{Ablation study under balance-attack.}
\resizebox{\linewidth}{!}{
\begin{tabular}{c c cccc cccc}
\toprule
\multirow{2}{*}{\textbf{Dataset}} & \multirow{2}{*}{\textbf{Ptb(\%)}} & \multicolumn{4}{c}{\textbf{SGCN}} & \multicolumn{4}{c}{\textbf{BA-SGCL (SGCN encoder)}} \\
\cmidrule(lr){3-6} \cmidrule(lr){7-10}
& & \multicolumn{1}{c}{AUC} & \multicolumn{1}{c}{Macro-F1} & \multicolumn{1}{c}{Micro-F1} & \multicolumn{1}{c}{Binary-F1} & \multicolumn{1}{c}{AUC} & \multicolumn{1}{c}{Macro-F1} & \multicolumn{1}{c}{Micro-F1} & \multicolumn{1}{c}{Binary-F1} \\
\midrule
\multirow{3}{*}{BitcoinAlpha} & 0  & 0.7997 & 0.6652 & 0.8523 & 0.9044 & \textbf{0.8447} & \textbf{0.7226} & \textbf{0.9298} & \textbf{0.9615} \\
                              & 10 & 0.6917 & 0.5128 & 0.6604 & 0.7682 & \textbf{0.7723} & \textbf{0.6113} & \textbf{0.8832} & \textbf{0.9225} \\
                              & 20 & 0.6532 & 0.4708 & 0.6025 & 0.7343 & \textbf{0.6954} & \textbf{0.5383} & \textbf{0.7921} & \textbf{0.8718} \\
\cmidrule{1-10}
\multirow{3}{*}{BitcoinOTC} & 0  & 0.8257 & 0.7505 & 0.8794 & 0.9293 & \textbf{0.8957} & \textbf{0.7821} & \textbf{0.9244} & \textbf{0.9557} \\
                            & 10 & 0.7508 & 0.6147 & 0.7477 & 0.8405 & \textbf{0.8229} & \textbf{0.6743} & \textbf{0.8578} & \textbf{0.9143} \\
                            & 20 & 0.6982 & 0.5625 & 0.6836 & 0.7922 & \textbf{0.7449} & \textbf{0.5823} & \textbf{0.7742} & \textbf{0.8598} \\
\cmidrule{1-10}
\multirow{3}{*}{Slashdot}   & 0  & 0.8156 & 0.6838 & 0.8124 & 0.8785 & \textbf{0.8851} & \textbf{0.7044} & \textbf{0.8724} & \textbf{0.9211} \\
                            & 10 & 0.6897 & 0.5714 & 0.6515 & 0.7525 & \textbf{0.7559} & \textbf{0.5962} & \textbf{0.7973} & \textbf{0.8721} \\
                            & 20 & 0.6344 & 0.5202 & 0.5935 & 0.7025 & \textbf{0.6776} & \textbf{0.5543} & \textbf{0.7685} & \textbf{0.8427} \\
\cmidrule{1-10}
\multirow{3}{*}{Epinions}   & 0  & 0.7767 & 0.6952 & 0.8184 & 0.8866 & \textbf{0.8623} & \textbf{0.7083} & \textbf{0.8661} & \textbf{0.9292} \\
                            & 10 & 0.7387 & 0.6122 & 0.7035 & 0.7973 & \textbf{0.7998} & \textbf{0.6737} & \textbf{0.8574} & \textbf{0.9046} \\
                            & 20 & 0.6885 & 0.5602 & 0.6456 & 0.7494 & \textbf{0.7478} & \textbf{0.6122} & \textbf{0.7992} & \textbf{0.8575} \\
\bottomrule
\end{tabular}}
\label{ablation_study_balance-attack}
\end{table}

\begin{table}[!h] \centering
\caption{Ablation study under FlipAttack.}
\resizebox{\linewidth}{!}{
\begin{tabular}{c c cccc cccc}
\toprule
\multirow{2}{*}{\textbf{Dataset}} & \multirow{2}{*}{\textbf{Ptb(\%)}} & \multicolumn{4}{c}{\textbf{SGCN}} & \multicolumn{4}{c}{\textbf{BA-SGCL (SGCN encoder)}} \\
\cmidrule(lr){3-6} \cmidrule(lr){7-10}
& & \multicolumn{1}{c}{AUC} & \multicolumn{1}{c}{Macro-F1} & \multicolumn{1}{c}{Micro-F1} & \multicolumn{1}{c}{Binary-F1} & \multicolumn{1}{c}{AUC} & \multicolumn{1}{c}{Macro-F1} & \multicolumn{1}{c}{Micro-F1} & \multicolumn{1}{c}{Binary-F1} \\
\midrule
\multirow{3}{*}{BitcoinAlpha} & 0  & 0.7997 & 0.6652 & 0.8523 & 0.9044 & \textbf{0.8447} & \textbf{0.7226} & \textbf{0.9298} & \textbf{0.9615} \\
                              & 10 & 0.6743 & 0.5547 & 0.6742 & 0.7987 & \textbf{0.7445} & \textbf{0.5689} & \textbf{0.8332} & \textbf{0.9098} \\
                              & 20 & 0.5607 & 0.4327 & 0.5794 & 0.7078 & \textbf{0.6052} & \textbf{0.4925} & \textbf{0.7121} & \textbf{0.8232} \\
\cmidrule{1-10}
\multirow{3}{*}{BitcoinOTC} & 0  & 0.8257 & 0.7505 & 0.8794 & 0.9293 & \textbf{0.8957} & \textbf{0.7821} & \textbf{0.9244} & \textbf{0.9557} \\
                            & 10 & 0.6848 & 0.6083 & 0.7234 & 0.8257 & \textbf{0.6929} & \textbf{0.6077} & \textbf{0.8112} & \textbf{0.8923} \\
                            & 20 & 0.6312 & 0.5517 & 0.6421 & 0.7593 & \textbf{0.6639} & \textbf{0.5723} & \textbf{0.7661} & \textbf{0.8524} \\
\cmidrule{1-10}
\multirow{3}{*}{Slashdot}   & 0  & 0.8156 & 0.6838 & 0.8124 & 0.8785 & \textbf{0.8851} & \textbf{0.7044} & \textbf{0.8724} & \textbf{0.9211} \\
                            & 10 & 0.6415 & 0.5217 & 0.6863 & 0.7471 & \textbf{0.7278} & \textbf{0.5752} & \textbf{0.8003} & \textbf{0.8661} \\
                            & 20 & 0.6045 & 0.4773 & 0.6263 & 0.7096 & \textbf{0.6723} & \textbf{0.5114} & \textbf{0.7476} & \textbf{0.8212} \\
\cmidrule{1-10}
\multirow{3}{*}{Epinions}   & 0  & 0.7767 & 0.6952 & 0.8184 & 0.8866 & \textbf{0.8623} & \textbf{0.7083} & \textbf{0.8661} & \textbf{0.9292} \\
                            & 10 & 0.6894 & 0.5212 & 0.7094 & 0.7631 & \textbf{0.7259} & \textbf{0.5778} & \textbf{0.8132} & \textbf{0.8776} \\
                            & 20 & 0.6144 & 0.4888 & 0.6831 & 0.7284 & \textbf{0.6662} & \textbf{0.5387} & \textbf{0.7889} & \textbf{0.8236} \\
\bottomrule
\end{tabular}}
\label{ablation_study_flipattack}
\end{table}

\subsection{Parameter Analysis}
We analyze the sensitivity of BA-SGCL to the hyperparameter $\alpha$, which balances the contrastive and label losses. We vary $\alpha$ across a wide range from $10^{-3}$ to $10^3$ and evaluate performance under $10\%$ and $20\%$ perturbation rates. As shown in Fig.~\ref{parameter analysis 0.1} and Fig.~\ref{parameter analysis 0.2}, the model's performance is sensitive to this parameter. While an appropriate value for $\alpha$ leads to strong robustness, extremely high or low values degrade performance. This indicates that an optimal balance between the self-supervised contrastive task and the supervised prediction task is crucial for achieving the best results.

\begin{figure}[!t] \centering
\centerline{\includegraphics[width=0.45\textwidth]{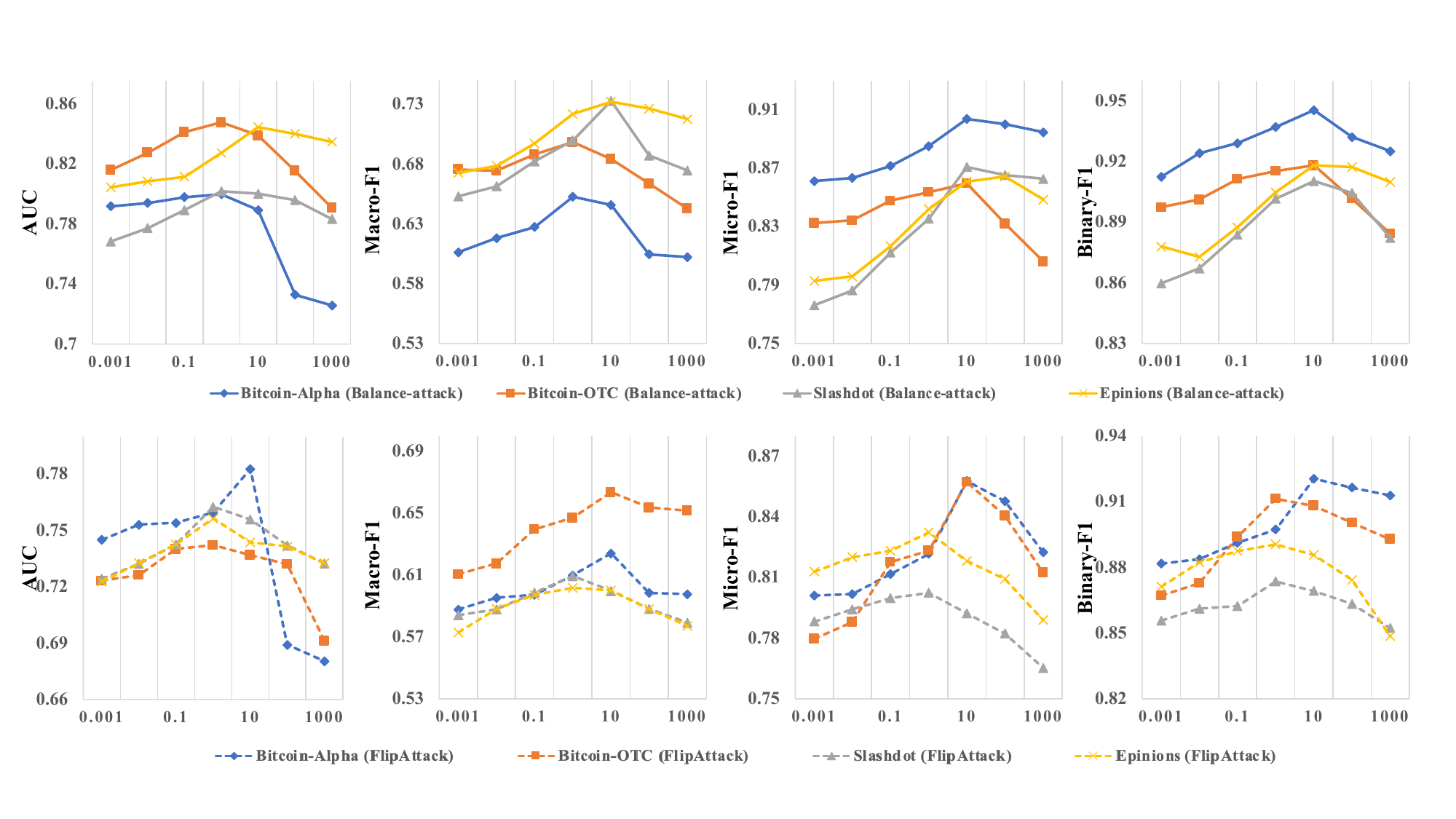}}
\caption{Parameter analysis under perturbation magnitude of $10\%$. Results for balance-attack (top) and FlipAttack (bottom) are shown.}
\label{parameter analysis 0.1}
\end{figure}

\begin{figure}[!h] \centering
\centerline{\includegraphics[width=0.45\textwidth]{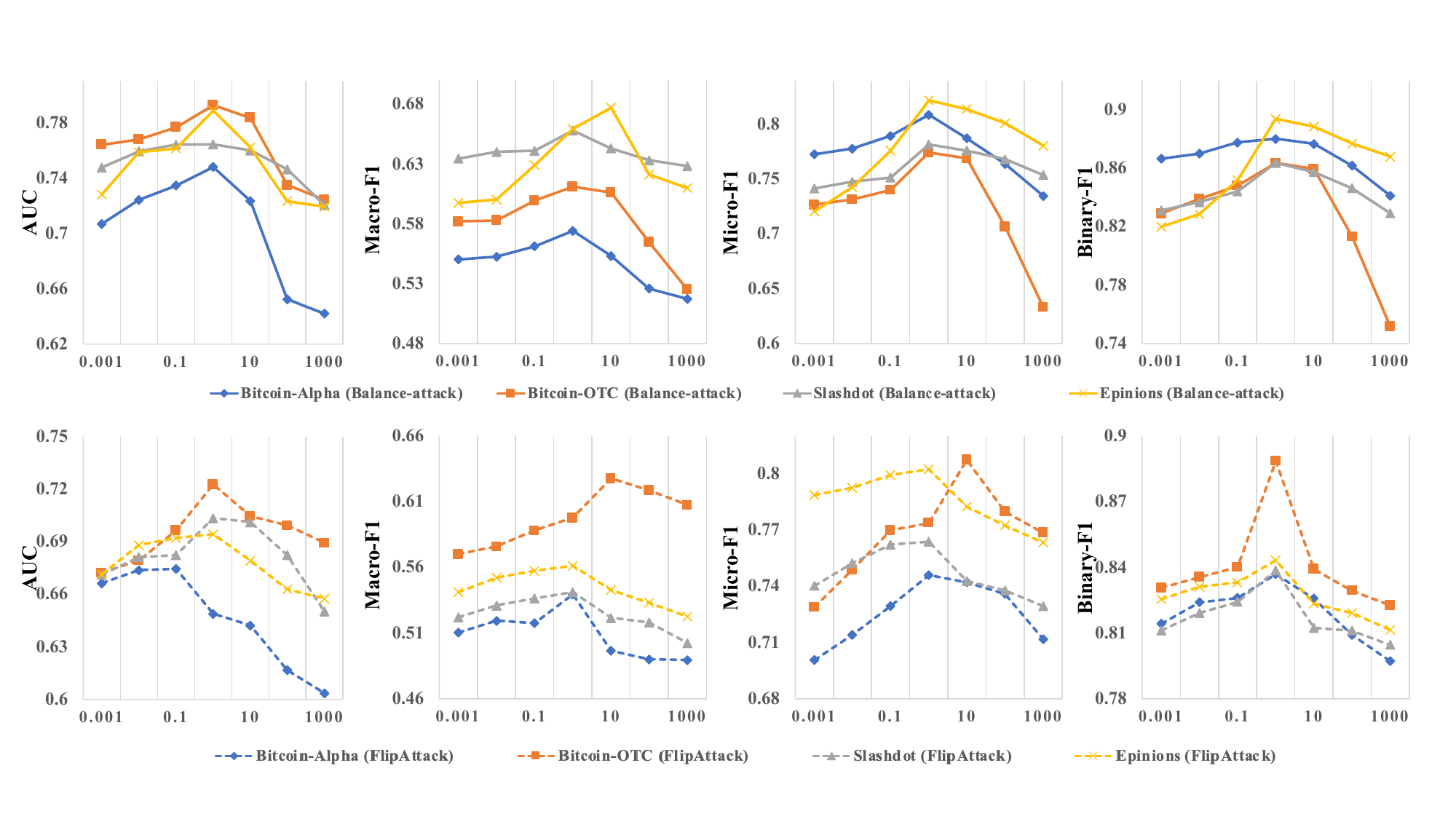}}
\caption{Parameter analysis under perturbation magnitude of $20\%$. Results for balance-attack (top) and FlipAttack (bottom) are shown.}
\label{parameter analysis 0.2}
\end{figure}

\section{Conclusion}
In this work, we demonstrate how balance theory's fundamental role in SGNNs introduces inherent vulnerabilities to adversarial attacks. We propose balance-attack, an efficient strategy targeting graph balance degree, and identify the ``Irreversibility of Balance-related Information" phenomenon in existing defense mechanisms. To address this, we develop BA-SGCL, which leverages contrastive learning with balance augmentation to maintain robust graph representations. Extensive experiments validate both the effectiveness of our attack and the enhanced resilience provided by BA-SGCL, advancing the security of signed graph analysis. This work pioneers robust learning against adversarial attacks in signed graph representation learning, establishing a foundation for future theoretical and empirical research.

\newpage

\vfill

\end{document}